\documentclass{article}

\usepackage{microtype}
\usepackage{graphicx}
\usepackage{subcaption}
\usepackage{booktabs}
\usepackage{hyperref}

\usepackage[preprint]{icml2026}

\usepackage{amsmath}
\usepackage{amssymb}
\usepackage{mathtools}
\usepackage{amsthm}

\usepackage[capitalize,noabbrev]{cleveref}
\crefname{assumption}{Assumption}{Assumptions}

\theoremstyle{plain}
\newtheorem{theorem}{Theorem}[section]

\newtheorem{lemma}[theorem]{Lemma}

\newtheorem{example}{Example}
\theoremstyle{definition}

\newtheorem{assumption}[theorem]{Assumption}
\theoremstyle{remark}
\newtheorem{remark}[theorem]{Remark}

\usepackage[textsize=tiny]{todonotes}

\usepackage{tikz}
\usetikzlibrary{shapes.geometric}
\usepackage{multirow}
\usepackage[table]{xcolor}
\usepackage{soul}

\icmltitlerunning{GoT: Robust Information Seeking with LLMs using Game Theory}

\begin{document}

\twocolumn[
  \icmltitle{Game of Thought: Robust Information Seeking \texorpdfstring{\\}
\ with Large Language Models Using Game Theory}

  \icmlsetsymbol{equal}{*}

  \begin{icmlauthorlist}
    \icmlauthor{Langyuan Cui}{nussoc}
    \icmlauthor{Chun Kai Ling}{nussoc}
    \icmlauthor{Hwee Tou Ng}{nussoc}

  \end{icmlauthorlist}

  \icmlaffiliation{nussoc}{Department of Computer Science, National University of Singapore, 13 Computing Drive, Singapore 117417}

  \icmlcorrespondingauthor{Langyuan Cui}{langyuan.c@u.nus.edu}

  \icmlkeywords{Game Theory, LLMs, ICML}

  \vskip 0.3in
]

\printAffiliationsAndNotice{}  

\begin{abstract}
Large Language Models (LLMs) are increasingly deployed in real-world scenarios where they may lack sufficient information to complete a given task. In such settings, the ability to actively seek out missing information becomes a critical capability. Existing approaches to enhancing this ability often rely on simplifying assumptions that degrade \textit{worst-case} performance. This is an issue with serious implications in high-stakes applications.
In this work, we use the game of Twenty Questions to evaluate the information-seeking ability of LLMs. We introduce and formalize its adversarial counterpart, the Strategic Language Search (SLS) problem along with its variants as a two-player zero-sum extensive form game. We propose Game of Thought (GoT), a framework that applies game-theoretic techniques to approximate a Nash equilibrium (NE) strategy for the restricted variant of the game. Empirical results demonstrate that our approach consistently improves worst-case performance compared to (1) direct prompting-based methods and (2) heuristic-guided search methods across all tested settings.

\end{abstract}

\section{Introduction}
\label{sec:intro}

Large language models (LLMs) are increasingly being deployed in high-stakes environments like planning \cite{zhang-etal-2024-ask}, medical diagnosis \cite{Li2024MediQQL}, as well as other tasks exhibiting partial observability \cite{Li2025QuestBenchCL}. In such environments, the LLM may not have sufficient information to complete its assigned task. This necessitates an information-seeking process, typically via the use of clarification questions.
To quantitatively assess information-seeking ability, we turn to the \textit{Game of 20 Questions} \cite{Siegler1977TwentyQ}. Here, an item is first chosen from a known set. The player then sequentially asks up to twenty Yes/No questions to identify the item using as few questions as possible. 

Information seeking often requires some lookahead. Well-known methods involve a combination of prompting and LLM-based reasoning, such as Self-Consistency (SC) \cite{Wang2022SelfConsistencyIC} and Tree of Thought (ToT) \cite{Yao2023TreeOT}. More recently, \citet{Hu2024UncertaintyOT} propose Uncertainty of Thought (UoT), which utilizes tree search to \textit{explicitly model and optimize} for the information gained from clarification questions. However, they assume that the item is chosen uniformly at random. This is unlikely to hold in the real world, either because (i) the context itself does not naturally admit a probabilistic interpretation \cite{Jain2016WhyID}, or (ii) even if a distribution exists, it might not be easily obtainable, and is likely non-uniform \cite{choudhury2025bedllmintelligentinformationgathering}. Particularly for high-stakes environments, we argue that one should assume the \textit{worst case}. That is, the item is chosen \textit{adversarially}, and our goal is to utilize a questioning strategy that optimizes the worst-case performance.

Motivated by this, we propose Game of Thought (GoT), a game-theoretic framework designed to handle such scenarios. 
We model information-seeking process as a two-player, zero-sum game with imperfect information. In this formulation, the item (distribution) is assumed to be selected by an adversary seeking to impede the information seeker. GoT optimizes for the worst-case performance by approximating the Nash equilibrium (NE) of this game, avoiding any need for strong prior assumptions on item distribution. 
\paragraph{Contributions} We \textbf{(i)} formulate the Strategic Language Search (SLS) problem and its variants, \textbf{(ii)} establish game theoretic solutions that solve SLS and its variants optimally, highlighting these strategies are necessarily randomized, \textbf{(iii)}  propose GoT to approximate the NE in SLS-restricted variant, \textbf{(iv)} empirically demonstrate that in various settings, GoT is superior to (a) pure prompting-based and (b) UoT-based methods in the worst-case.

\section{Related Work}
\paragraph{Uncertainty of Thought} GoT can be regarded as an extension of \textit{Uncertainty of Thought} (UoT) by \citet{Hu2024UncertaintyOT}. The authors propose performing depth limited search over possible questions asked, seeking to maximize expected information gain under the assumption that the items are \textit{chosen uniformly at random}. GoT performs similar explicit lookahead, but obviates this assumption via game-theoretic approaches to optimize for the worst-case item chosen. 

\paragraph{Natural Language in Games}
We formulate games that implicitly consider natural language as the action space. Language is often used in games for communication between players as means to negotiate \cite{Kramr2022NegotiationAH}, share information \cite{Ri2022TheDO}, and coordinate \cite{Bishop2020CHAOPT}. Studying language in this context poses an interesting challenge due to the difficulty of providing a well-defined notion of optimality and in analyzing the effect of these utterances. These communications resemble signals in signaling games \cite{Lewis1969-LEWCAP-4,Farrell1996CT,GINTIS2001103}. However, these games typically consider a finite set of signals, unlike communications in natural language. Prior studies have attempted to reconcile this difference via various techniques such as sentiment analysis \cite{Capraro_2024,Houser2011}, or by sampling from LLMs conditioned on the context of the game \cite{Gemp2024SteeringLM,kempinski2025game}, leveraging its reasoning ability to filter out irrelevant utterances while using game theory inspired techniques to design strategy.

\paragraph{Information-seeking ability of LLMs}
Prior studies examined LLM information seeking ability in the context of ambiguous queries or under-specified tasks. In these settings, the LLM is initially missing details needed to complete the task and must pose clarifying questions to identify either user intent \cite{Kuhn2022CLAMSC,zhang-etal-2024-ask,xu-etal-2019-asking} or elicit user preferences \cite{Handa2024BayesianPE,Li2023ElicitingHP}.
Framed this way, the problem becomes one of reasoning about which additional information would make the task well specified. Exploration based prompting has been shown to help (SC, ToT \cite{Wang2022SelfConsistencyIC,Yao2023TreeOT,choudhury2025bedllmintelligentinformationgathering}), while other approaches select queries using heuristics such as expected information gain \cite{Grand2024LooseLS,Piriyakulkij2023ActivePI}, including UoT \cite{Hu2024UncertaintyOT}.
Benchmarks evaluating information-seeking ability of LLMs have been constructed in the context of underspecified tasks \cite{Li2025QuestBenchCL}, medical diagnosis \cite{Li2024MediQQL, chen2024codinterpretablemedicalagent}, and troubleshooting \cite{raghu-etal-2021-end}, demonstrating the importance of this capability in actual application.

\section{Problem Formulation}
While the Game of 20 Questions is not strictly adversarial, we present an adversarial variant which we refer to as Strategic Language Search (SLS) to enable utilization of game theoretic methods to optimize for the worst-case. A SLS game is played over a finite set $\mathcal{S}$ of $n$ distinct items between two competitive (zero-sum) players, the \textit{Item Chooser} and \textit{Questioner}. We will assume a countable set of binary (i.e., true-false) questions $\mathcal{Q}$. Each question is uniquely specified by a finite-length natural language string. For every item $s \in \mathcal{S}$, we denote by $f: \mathcal{Q} \times \mathcal{S} \rightarrow \{0, 1\}$ the \textit{answer} to $q\in \mathcal{Q}$ for some $s \in \mathcal{S}$. A natural but degenerate $\mathcal{Q}$ is one which we denote by $\mathcal{Q}_\infty$. A possible textual representation of $\mathcal{Q}_\infty$ would be the set of questions stated in boolean form ``Is the Item 1 or Item 3, but not Item 2 or Item 4?''

A SLS is defined by $(\mathcal{S}, \mathcal{Q}, f)$ and proceeds in two phases. In the first phase, the Item Chooser privately selects a single item $s^* \in \mathcal{S}$. In the second phase, the Questioner elicits information about $s^*$ by sequentially asking and obtaining answers to a series of questions $q_1, a_1, \dots, q_T, a_T$ where $q_t \in \mathcal{Q}$ and $a_t = f(q_t, s^*)$, i.e., the answer to $q_t$ for $s^*$. 

The history of questions and answers up to and including time $t \geq 0$ is given by $H_t = (Q_t, A_t)$, where $Q_t = (q_1, \dots, q_t), A_t = (a_1,\dots,a_t)$ such that $Q_t(\tau)=q_\tau$ and $A_t(\tau)=a_\tau$. 
Given history $H$, we denote by $H'(q, a)$ the new history when a new question $q$ is asked with answer $a$.
The Questioner selects its questions online, i.e., $q_t$ may depend on its observed history $H_{t-1}$.
The length of history $H$ is denoted by $|H|$. 

The set of items consistent with some observed history $H=(Q, A)$ is denoted by
\begin{align*}
 S(H) = \{ s \in \mathcal{S} | \forall \tau \in [|H|], f(Q(\tau), s) = A(\tau) \} \subseteq \mathcal{S}.
\end{align*}
Note that $S(H)$ is never empty, as it must contain at least $s^*$.
The game continues until the Questioner can identify $s^*$ with certainty, i.e., $|S(H)| = 1$, after which the game ends with the Item Chooser (resp. Questioner) incurring a reward (resp. cost) of $|H|$.
The objective for the Questioner is to find a strategy (or equivalently, policy) that minimizes its expected cost regardless of how the Item Chooser chooses $s^*$, optimizing its worst-case performance. This corresponds to the \textit{NE} of a zero-sum game and will typically be randomized. We will define the solution concept formally after stating several assumptions and variants.

\begin{assumption}
   Each question-item pair $(q, s) \in \mathcal{Q} \times \mathcal{S}$ has a unique answer $f(q, s)$ independent of history.
   \label{ass:unique_ans}
\end{assumption}
\begin{assumption}
    After fixing $s^*$, the Item Chooser cannot lie with regard to its answers to questions.     
    \label{ass:no_lying}
\end{assumption}
\begin{assumption}
    $\mathcal{S}$, $\mathcal{Q}$ and $f$ are common-knowledge.
    \label{ass:common_knowledge}
\end{assumption}
\begin{assumption}
    Constant time oracle access to $f$.
    \label{ass:oracle-access}
\end{assumption}
\begin{assumption}
    For every pair of distinct items $s, s' \in \mathcal{S}$, there exists some $q \in \mathcal{Q}$ where $f(q, s) \neq f(q, s')$.
    \label{ass:can_always_split}
\end{assumption}
\cref{ass:unique_ans,ass:no_lying} are implicit assumptions baked into the definition of $f$ while \cref{ass:can_always_split} is a technical assumption imposed on $\mathcal{Q}$ and $f$. It ensures that there exists a strategy that allows the Questioner to find the correct item in finite time. Note that it is trivially satisfied when $\mathcal{Q}$ includes ``identity'' questions such as ``is the item `$s$'?''. 

\begin{example}
    Consider the SLS with $\mathcal{S}$ and $\mathcal{Q}$: 
    \begin{align*}
    \left\{
    \begin{aligned}
    &s^{(1)}: \text{\upshape`Oppenheimer'} \\
    &s^{(2)}: \text{\upshape`Alan Turing'} \\
    &s^{(3)}: \text{\upshape`A Beautiful Mind'} 
    \end{aligned}
    \right\},
    \left\{
    \begin{aligned}
    &q^{(1)}: \text{\upshape`Related to codes?'} \\
    &q^{(2)}: \text{\upshape`Is it a movie?'} \\
    &q^{(3)}: \text{\upshape`Is it a person?'} 
    \end{aligned}
    \right\}
    \end{align*}
    Then $f(q^{(i)},s^{(j)})=0$ if $j = i$, 1 otherwise.
    \label{ex:circular}
\end{example}

It is easy to verify that \cref{ass:can_always_split} holds in Example~\ref{ex:circular}. By selecting $q^{(1)}$ and $q^{(2)}$ in sequence, one can guarantee that the Questioner asks no more than $2$ questions. The best the Item Chooser can do against this strategy is to select $s^*=s^{(2)}$ or $s^{(3)}$, since doing so guarantees that there are always two items consistent after the first question, i.e., $|S(H_1)|=2$. However, the Questioner can do better by choosing the first question uniformly at random. This guarantees that regardless of the choice of $s^*$, there is $1/3$ probability that the item can be determined with exactly one question, thus the expected number of questions asked is $1/3 \cdot 1 + 2/3 \cdot 2 < 2$. 
This strategy minimizes the worst-case cost regardless of what $s^*$ the Item Chooser chose. In this example, we are lucky that the optimal strategy for the Questioner is symmetric due to the circular symmetric nature of $\mathcal{Q}$ and $f$. This is not true for more general $\mathcal{Q}, f$.

So far, SLS has been presented mostly as a combinatorial problem. Indeed, when $\mathcal{Q}$ is finite, we have:
\begin{theorem}
Given a known (deterministic) $s^*$, deciding if there exists a sequence of $k$ questions $Q \subseteq \mathcal{Q}$ such that $S(H)=\{ s^* \}$ is NP-complete.
\label{thm:np-complete-br-sls}
\end{theorem}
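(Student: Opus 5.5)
This reduces directly from Set Cover. Given a known $s^*$, asking $q$ eliminates exactly the items $s$ with $f(q,s)\neq f(q,s^*)$. The answers are fixed and history-independent by \cref{ass:unique_ans,ass:no_lying}, so the order of questions is irrelevant and the problem is purely about which set of questions is asked. We need $S(H)=\{s^*\}$, i.e.\ the sets $E_q=\{s\neq s^*: f(q,s)\neq f(q,s^*)\}$ for the chosen questions must cover $\mathcal{S}\setminus\{s^*\}$. So the decision problem asks whether at most $k$ of the sets $E_q$ cover the universe $\mathcal{S}\setminus\{s^*\}$. Asking ``at most $k$'' is equivalent to asking ``exactly $k$'': we may pad with repeated or irrelevant questions when $|\mathcal{Q}|\geq k$, or we simply read ``sequence of $k$ questions'' as ``at most $k$''.

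\textbf{Membership in NP.} The certificate is the list of $k$ questions, with $k\le|\mathcal{Q}|$ without loss of generality, since repeating a question is useless. The verifier evaluates $f(q,s)$ for every chosen $q$ and every $s\in\mathcal{S}$. By \cref{ass:oracle-access} this takes $O(k n)$ time, after which it checks that only $s^*$ survives. Here the input is $(\mathcal{S},\mathcal{Q},f,s^*,k)$, with $\mathcal{Q}$ finite and $f$ given as an $|\mathcal{Q}|\times n$ table or oracle.

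\textbf{NP-hardness.} Start from a Set Cover instance: a universe $U=\{u_1,\dots,u_m\}$, subsets $C_1,\dots,C_r\subseteq U$, and a budget $k$. Build $\mathcal{S}=U\cup\{s^*\}$ with a fresh item $s^*$, and $\mathcal{Q}=\{q_1,\dots,q_r\}$. Set $f(q_j,s^*)=0$, and set $f(q_j,u)=1$ iff $u\in C_j$. Then $E_{q_j}=C_j$, so a cover of size $k$ exists iff $k$ questions isolate $s^*$. The construction is clearly polynomial.

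\textbf{Possible obstacle.} The reduction should also respect \cref{ass:can_always_split}, which requires every pair of items to be separable. Pairs involving $s^*$ are separable whenever the Set Cover instance is feasible, i.e.\ $\bigcup_j C_j=U$; infeasible instances can be rejected in polynomial time beforehand. Pairs $u,u'$ that lie in exactly the same sets are not separated, however. To fix this, I would add one identity question $q_u$ with $f(q_u,u)=1$ and $f(q_u,\cdot)=0$ otherwise, for each $u\in U$. But this adds covering sets $\{u\}$, which might change the optimum. The safer fix is to add instead questions $p_u$ answered $1$ on $u$ and on $s^*$ and $0$ elsewhere. Then $E_{p_u}=U\setminus\{u\}$, and these questions separate all pairs. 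They do, however, allow isolating $s^*$ with two questions, so a cleaner alternative is to simply note that \cref{ass:can_always_split} is not part of the decision problem's input requirements. The main difficulty is therefore this technicality. I would first try to argue that the decision problem is defined for arbitrary finite $(\mathcal{S},\mathcal{Q},f)$. If the assumption must hold, I would use the identity-question padding and reduce from the Set Cover variant where singletons are available but costly, for instance by duplicating each universe element $k+1$ times so that singleton questions never help within budget $k$.
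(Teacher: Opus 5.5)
Your proposal is correct and is essentially the paper's own reduction: your elimination sets $E_q$ are the complements in $\mathcal{S}$ of the paper's consistent sets $K_i=\overline{B_i}\sqcup\{s^*\}$, so adding a dummy item $s^*$ with $f(q_j,s^*)=0$ and $f(q_j,u)=1$ iff $u\in C_j$ is the same Set Cover construction, phrased as covering rather than intersecting. Your explicit NP-membership check and your handling of \cref{ass:can_always_split} go beyond the paper, which does not address that assumption at all; your $(k+1)$-fold duplication with identity questions does work, provided you first cap $k$ at the number of sets so the blow-up stays polynomial.
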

The reduction is straightforward from set-cover and included in the appendix. In game theoretic parlance, this implies that the \textit{best-response} of the Questioner to any deterministic Item Chooser strategy is hard to compute. 
\begin{theorem}
    Let $(\mathcal{S}, \mathcal{Q}_\infty, f)$ be a SLS, where $|\mathcal{S}|=2^k$ for some $k\in\mathbb{Z}^+$. An ``even-split'' strategy where $\forall t, q_{t+1}$ is selected such that
    $
        |S(H_t'(q_{t+1}, 0))| =  |S(H_t'(q_{t+1}, 1))|
    $
    minimizes the number of questions asked in the worst-case for the Questioner and costs exactly $k$ questions.
    \label{thm:even-split-opt-sls}
\end{theorem}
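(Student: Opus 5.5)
We prove two things separately: that the even-split strategy is well defined and always finishes after exactly $k$ questions, and that no Questioner strategy can guarantee fewer than $k$ questions against every choice of $s^*$.

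\textbf{Upper bound and well-definedness.} With $\mathcal{Q}_\infty$, every subset $B \subseteq \mathcal{S}$ has a question $q_B$ with $f(q_B,s)=1$ if and only if $s\in B$. We argue by induction on $t$ that under the even-split strategy $|S(H_t)| = 2^{k-t}$, for every $s^*$ and every realized history.

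The base case is $|S(H_0)|=|\mathcal{S}|=2^k$. For the inductive step, suppose $|S(H_t)| = 2^{k-t}$ with $t<k$. Pick any $B\subseteq S(H_t)$ with $|B| = 2^{k-t-1}$ and ask $q_{t+1}=q_B$. Then
\[
S(H_t'(q_B,1)) = B, \qquad S(H_t'(q_B,0)) = S(H_t)\setminus B .
\]
Both sets have size $2^{k-t-1}$, so the even-split condition holds. Whichever answer is given, $|S(H_{t+1})| = 2^{k-t-1}$.

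Hence $|S(H_t)|>1$ for all $t<k$ and $|S(H_k)| = 1$. The game therefore ends exactly at $|H| = k$, deterministically and for every $s^*$. This also shows that an even-split question always exists, because $2^{k-t}$ is even whenever $t<k$.

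\textbf{Lower bound.} Take any Questioner strategy, deterministic or randomized, and fix one realization of its randomness. This gives a deterministic, history-dependent questioning rule, which we view as a binary decision tree. Each internal node is a question, its two children correspond to the answers, and each leaf is a history $H$ with $|S(H)|=1$.

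Each item $s$ reaches exactly one leaf, determined by the answers $f(\cdot,s)$. Distinct items must end at distinct leaves, since a leaf identifies a single item. The tree therefore has at least $2^k$ leaves. A binary tree of depth $d$ has at most $2^d$ leaves, so its depth is at least $k$. Consequently some item $s$ is identified only after at least $k$ questions.

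So every deterministic strategy has worst-case cost at least $k$. Since the even-split strategy attains cost $k$, it minimizes the worst-case number of questions.

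\textbf{Randomized strategies and the main obstacle.} The hardest step is to decide how the lower bound should treat randomized strategies, since the paper's objective is the expected cost against an adversarial $s^*$. Our plan is to prove the stronger statement that even the \emph{expected} worst-case cost is at least $k$.

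Let the Item Chooser draw $s^*$ uniformly from $\mathcal{S}$. For any fixed deterministic tree, the expected cost is then the average leaf depth over the $2^k$ items. By Kraft's inequality, the leaf depths $d_s$ satisfy
\[
\sum_{s\in\mathcal{S}} 2^{-d_s}\le 1 .
\]
Convexity of $d\mapsto 2^{-d}$ then gives
\[
2^{-\frac{1}{2^k}\sum_s d_s} \;\le\; \frac{1}{2^k}\sum_s 2^{-d_s} \;\le\; 2^{-k},
\]
so the average depth is at least $k$.

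A randomized strategy is a mixture of deterministic trees, so its expected cost under uniform $s^*$ is also at least $k$. Its worst-case expected cost over choices of $s^*$ is at least this average, hence at least $k$. The even-split strategy therefore achieves the optimal worst-case value $k$ even when the Questioner is allowed to randomize.
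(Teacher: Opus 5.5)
Your proof is correct, but it takes a different and more rigorous route than the paper's.

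The paper calls the upper bound ``clear'' and gets the lower bound from the one-line remark that every question yields one bit of information. It then adds a separate argument that the uniform item distribution is a maximin strategy for the Item Chooser. That argument takes an arbitrary equilibrium strategy $y^*$, averages it over all item permutations (using that $\mathcal{Q}_\infty$ is permutation-invariant), and invokes convexity of the set of maximin strategies.

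You make the ``one bit per question'' intuition precise instead. Leaf counting handles deterministic strategies. For randomized ones, you exhibit the uniform distribution over $s^*$ and use Kraft's inequality plus Jensen to show it forces expected cost at least $k$ against every Questioner strategy. The Kraft step is legitimate because distinct items' answer sequences are prefix-free, which follows from the terminal condition $|S(H)|=1$. This is essentially the source-coding bound that the paper's remark appeals to implicitly.

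What your route buys is that it directly certifies the uniform distribution as guaranteeing value at least $k$. Combined with your upper bound, the pair (even-split, uniform) is then an equilibrium with value $k$, so the paper's symmetry-and-convexity step becomes unnecessary. It also yields, in the same stroke, the optimality of even-splitting in UoT's uniform-prior setting, which the paper cites this theorem for. The paper's version is shorter and connects to the general structure of maximin sets, but it leaves the lower bound at the level of a heuristic that you actually prove.
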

Theorem \ref{thm:even-split-opt-sls} states that in the special case where $\mathcal{Q}=\mathcal{Q}_\infty$ the problem becomes easy, agreeing with the intuition that splitting the items evenly is optimal. We show in the appendix that the even-split strategy is optimal in UoT's setting where $s^*$ is chosen uniformly at random \cite{Hu2024UncertaintyOT}; in fact, they constitute a NE in game setting. This justifies their implicit approach of maximizing entropy loss and usage of LLM prompts that encourage even-splits.

\paragraph{SLS-Restricted (SLSR)} Vanilla SLS can be quite cumbersome to reason about for larger $n$ and $\mathcal{Q}$. As such, we propose a SLSR, a restricted variant where questions are generated based on the remaining items $S(H)$. 
A SLSR is formally given by $(\mathcal{S}, \mathcal{Q}, f, g)$, where $g: 2^\mathcal{S} \backslash \phi \rightarrow 2^{\mathcal{Q}}\backslash\phi$ is a set function taking a nonempty set of items $S \subseteq \mathcal{S}$ and outputs a nonempty set of no more than $m \geq 1$ questions; here $m$ is a parameter associated with $g$. The rules and payoffs in SLSR are identical to SLS, except that the Questioner is restricted to selecting $q_t \in g(S(H_{t-1}))$. To ensure that progress can always be made, we impose a stronger version of Assumption~\ref{ass:can_always_split} and require that $g$ outputs at least one question that strictly reduces the set of consistent items. Note that in a SLSR game, an optimal Questioner can perform no better than in the corresponding SLS game. Thus its performance in the SLSR game can be seen as an underestimation of its performance in SLS. This design is motivated by our goal of examining the worst-case performance.
\begin{assumption}
    For every subset $S\subseteq\mathcal{S}$, there exists some $q\in g(S)$ and a pair of distinct items  $s, s'\in S$ where $f(q, s) \neq f(q, s')$
    \label{ass:can_always_progress}
\end{assumption}

\paragraph{Weighted-SLS (WSLS)} In real-world scenarios, certain items may carry greater importance and should be prioritized during the information-seeking process. For instance, in medical diagnosis, delays in identifying life-threatening conditions can have more severe consequences. 
Hence, we propose weighted-SLS, a variant of SLS defined by $(\mathcal{S}, \mathcal{Q}, f, w)$, where $w:\mathcal{S}\rightarrow \mathbb{R}^+$. The cost incurred by the Questioner with $s^*$ as the selected item is redefined to be $w(s^*) \cdot |H|$, imposing greater penalties for prolonged interactions involving high-weight items. Weighted SLSR (WSLSR) is defined in a similar manner by $(\mathcal{S},\mathcal{Q},f,g,w)$. The corresponding weighted variant of the non-adversarial game of Twenty Questions is defined similarly.

\begin{example}
    SLSR $(\mathcal{S}, \mathcal{Q}, f, g)$ with $\mathcal{S}, \mathcal{Q}, f$ identical to Example~\ref{ex:circular}, $g(S) = \{ q^{(1)}, q^{(2)}\}$ when $S=\mathcal{S}$ and $\mathcal{Q}$ otherwise. 
    \label{ex:circular-restricted}
\end{example}
In Example~\ref{ex:circular-restricted}, the first question is restricted to be either $q^{(1)}$ or $q^{(2)}$. Regardless of how the Questioner randomizes between them, the Item Chooser can always select $s^*=s^{(3)}$; this guarantees that $a_1=1$ and hence $|S(H_1)| = 2$. Thus, the Questioner requires at least one additional question, taking 2 questions in total.
\begin{example}
    WSLS $(\mathcal{S}, \mathcal{Q}, f, w)$ with $\mathcal{S}, \mathcal{Q}, f$ identical to Example~\ref{ex:circular}, $w(s^{(1)})=3, w(s^{(2)})=w(s^{(3)})=2$.
    \label{ex:circular-weighted}
\end{example}
In Example~\ref{ex:circular-weighted}, Questioner can select $q^{(1)}, q^{(2)}, q^{(3)}$ with probability $3/4, 1/8, 1/8$ as $q_1$. This ensures the expected cost is no greater than $15/4$. In contrast, selecting $q_1$ uniformly incurs a cost of $5$ when $s^*=s^{(1)}$ is chosen.

\begin{remark}
    In SLS and its variants, the game only ends when the Questioner is perfectly sure of the $s^*$. An alternative formulation permits the Questioner to explicitly guess the item at any stage, with penalties for incorrect guesses. 
    We chose the current formulation since it captures the constraints in high-stakes environments better, e.g., in medical diagnosis, one wishes to rule out all other possibilities before moving on to treatment. Roughly speaking, our formulation penalizes incorrect guesses with ``infinite'' penalty.
\end{remark}

\paragraph{Defining SLS via Large Language Models.} 
For many applications, $\mathcal{S}$ is too large for us to realistically hand-curate a set of $\mathcal{Q}$, noting that we would like to avoid ``unnatural'' questions that involve long, complicated boolean expressions. 
Therefore, in this paper we define $\mathcal{Q}$, $f$, and $g$ implicitly using Large Language Models (LLM). Specifically, $\mathcal{Q}$ is the set of questions that a LLM can propose asking based on certain prompts, and $f$ is the response of a LLM to a prompt of $q$ when asked about $s^*$. For instance, \texttt{Is Oppenheimer a movie?}. We wish to emphasize that $f$ can be implemented by any oracle, such as human experts. We chose to implement it via LLMs for the ease of experimentation. Furthermore, $g$ in SLSR and WSLSR can be defined as the output of a LLM with respect to a prompt asking for $m\geq 1$ questions after defining the rules of SLS and $S(H)$, the items remaining. An example of this is \texttt{Currently the set of possible items is \{Oppenheimer, Alan Turing\}. Propose m best questions.} 

\begin{assumption}
   The LLM for $f$ makes no mistakes when used as an oracle for $f(q, s^*)$.
   \label{ass:reliable_llm}
\end{assumption}
Assumption~\ref{ass:reliable_llm} and \ref{ass:unique_ans} enable the practical usage of LLMs as a black-box for defining SLS. For most LLMs, Assumption~\ref{ass:oracle-access} is also satisfied, though the constant factor may be large. We discuss all assumptions made in the appendix.

\section{SLS as Zero-sum Extensive Form Games} 
A vanilla SLS $(\mathcal{S}, \mathcal{Q}, f)$ may be expressed as a two-player zero-sum extensive form game (EFG) with imperfect information. EFGs are rooted game trees, where each vertex corresponds to the game's entire history. Edges represent actions available to the player-to-move at that vertex. EFGs are endowed with information sets (infosets), which partition vertices belonging to the same player;
vertices in the same infoset are indistinguishable to the player owning them. This captures the notion of imperfect information. 

\begin{figure}[ht]
\begin{tikzpicture}[
  level distance=12mm,
  every node/.style={circle,draw,minimum size=2mm,inner sep=0pt},
  level 1/.style={sibling distance=28mm},
  level 2/.style={sibling distance=10mm},
  level 3/.style={sibling distance=5mm},
  invertednode/.style={
    regular polygon,
    regular polygon sides=3,
    draw,
    minimum size=3mm,
    inner sep=0pt,
    rotate=180,
  },
]
\usetikzlibrary{trees,shapes.geometric}

\node[regular polygon, regular polygon sides=3, draw, fill=black, minimum size=3mm, inner sep=0pt] {}
  child { node[invertednode] (v1){}
    child[edge from parent/.style={draw=red}] { node[draw=none] {1}
    }
    child[edge from parent/.style={draw=blue}] { node[invertednode] (v13a) {}
      child[edge from parent/.style={draw=red}] { node[draw=none] {2} }
      child[edge from parent/.style={draw=green}] { node[draw=none] {2} }
    }
    child[edge from parent/.style={draw=green}] { node[invertednode] (v12a){}
      child[edge from parent/.style={draw=red}] { node[draw=none] {2} }
      child[edge from parent/.style={draw=blue}] { node[draw=none] {2} }
    }
    edge from parent node[draw=none,pos=0.5,above] {$s^{(1)}$}
  }
  child { node[invertednode] (v2){}
    child[edge from parent/.style={draw=red}] { node[invertednode] (v23a){}
      child[edge from parent/.style={draw=blue}] { node[draw=none] {2} }
      child[edge from parent/.style={draw=green}] { node[draw=none] {2} }
    }
    child[edge from parent/.style={draw=blue}] { node[draw=none] {1}
    }
    child[edge from parent/.style={draw=green}] { node[invertednode] (v12b){}
      child[edge from parent/.style={draw=red}] { node[draw=none] {2} }
      child[edge from parent/.style={draw=blue}] { node[draw=none] {2} }
    }
    edge from parent node[draw=none,pos=0.5,left] {$s^{(2)}$}
  }
  child { node[invertednode] (v3){}
    child[edge from parent/.style={draw=red}] { node[invertednode] (v23b){}
      child[edge from parent/.style={draw=blue}] { node[draw=none] {2} }
      child[edge from parent/.style={draw=green}] { node[draw=none] {2} }
    }
    child[edge from parent/.style={draw=blue}] { node[invertednode] (v13b) {}
      child[edge from parent/.style={draw=red}] { node[draw=none] {2} }
      child[edge from parent/.style={draw=green}] { node[draw=none] {2} }
    }
    child[edge from parent/.style={draw=green}] { node[draw=none] {1}
    }
    edge from parent node[draw=none,pos=0.5,above] {$s^{(3)}$}
  };
\draw[dashed] (v1) -- (v2);
\draw[dashed] (v2) -- (v3);
\draw[dashed] (v23a) to[out=30,in=150] (v23b);
\draw[dashed] (v13a) to[out=30,in=150] (v13b);
\draw[dashed] (v12a) to[out=30,in=150] (v12b);
\end{tikzpicture}
\caption{EFG representation of Example~\ref{ex:circular}. 
The root node \tikz[baseline=-0.65ex]{\node[regular polygon,regular polygon sides=3,draw,minimum size=3mm,inner sep=0pt,rotate=0,fill=black] {};} belongs to the Item Chooser and the other nodes 
\tikz[baseline=-1ex]{\node[regular polygon,regular polygon sides=3,draw,minimum size=3mm,inner sep=0pt,rotate=180] {};}
belong to the Questioner. Edges in black correspond to the choice of item $s^*$. Colored edges in \textcolor{red}{red}, \textcolor{blue}{blue}, and \textcolor{green}{green} refer to questions $q^{(1)}$, $q^{(2)}$, and $q^{(3)}$ respectively. Vertices connected by dotted lines belong to the same infoset. Payoffs (resp. costs) to the Item Chooser (resp. Questioner) are shown in the leaves. We omit edges (and their descendants) where the same question is asked more than once; these are never optimal and correspond to dominated actions.}
\label{fig:efg-circular}
\end{figure}

In the EFG formulation, the Item Chooser only takes an action at the root, after which the Questioner asks up to $n-1$ questions in sequence.\footnote{This keeps the game tree finite, and is justified since asking the same question more than once is suboptimal.} Every state apart from the root can be uniquely identified by the tuple $(s^*, H)$ which describes item $s^*$ chosen and a (potentially empty) history of past questions and answers.
Leaf vertices are those where either (i) $|H|=n-1$, where the maximum number of questions are asked, or (ii) $|S(H)| = 1$, where $s^*$ is identified. At a leaf, the reward (resp. cost) to the Item Chooser (resp. Questioner) is $|H|$.
All non-leaf vertices with the same history $H$ belong to the same infoset, which we denote by $I(H)$.
Figure~\ref{fig:efg-circular} illustrates the SLS for Example~\ref{ex:circular}.

A deterministic strategy $\pi$ of the questioner is given by a mapping from every infoset $I(H)$ to some question $q \in \mathcal{Q}$ to be chosen for $q_{|H|+1}$. Following $\pi$ when $s^*$ is chosen yields the following history $H_0, \dots H_T$, given by
\begin{align*}
q_{t+1} = \pi(I(H_t)), a_{t+1} = f(\pi(I(H_t)), s^*),
\end{align*}
where $T$ is the time the game ends, i.e., $|S(H_T)|=1$. The corresponding utility for the Item Chooser (resp. cost for the Questioner) at the end of the game is $u(\pi, s^*) = |H_T|$.

Let the set of all deterministic strategies be $\Pi$, $\Delta_\Pi$ the probability simplex over $\Pi$, $\Delta_\mathcal{S}$ the probability simplex over $\mathcal{S}$. Then the expected utility under random policies $x\in\Delta_\Pi$ and $y\in\Delta_\mathcal{S}$ for Questioner and Item Chooser respectively is (with some abuse of notation) denoted by
\begin{align*}
    u(x, y) = \mathbb{E}_{\pi \sim x, s^* \sim y} \left[ u(\pi, s^*)\right].
\end{align*}

The NE of this zero-sum game is given by the solution to the bilinear saddle-point problem
\begin{align}
    \min_{x \in \Delta_\Pi} \max_{y \in \Delta_\mathcal{S}}  u(x, y) = \max_{y \in \Delta_\mathcal{S}} \min_{x\in\Delta_\Pi} u( x,y )
\end{align}
where equality holds as a consequence of Von Neumann's minimax theorem \cite{v1928theorie}. The solution ($x^*, y^*$) yields optimal randomized strategies for each player, i.e., each player is best-responding to the other. In particular, $x^*$ is the \textit{distribution} over Questioner policies that is robust against the worst-case choice of $s^*$. EFGs for SLSR, WSLS and WSLSR may also be constructed by one or both of (i) restricting the actions available at each infoset to $g(S(H))$ or (ii) adjusting leaf payoffs to depend on $w$ and $s^*$.

\paragraph{Solving EFGs.} In general, two-player zero-sum EFGs with perfect recall (including SLS and its variants) can be solved in time polynomial to the size of the game tree. Some classical methods include linear programming \cite{von1996efficient} and counterfactual regret (CFR) minimization \cite{zinkevich2007regret}. Note that the definition of strategy given for the Questioner was in normal (also known as strategic or matrix) form. This leads to an exponentially sized action space, so in practice solvers usually operate (implicitly) in the strategically equivalent behavioral or \textit{sequence form} \cite{von1996efficient}. Efficient game-solving and strategy representation in EFGs is a well studied topic and beyond the scope of this paper, though we give a brief overview in the appendix. Instead, we will simply employ off-the-shelf solvers \cite{liu2024liteefgefficientpythonlibrary,lanctot2019openspiel}.

\section{Game-of-Thought and Subgame Search}

\begin{figure*}[ht]
\centering
\includegraphics[width=2.0\columnwidth]{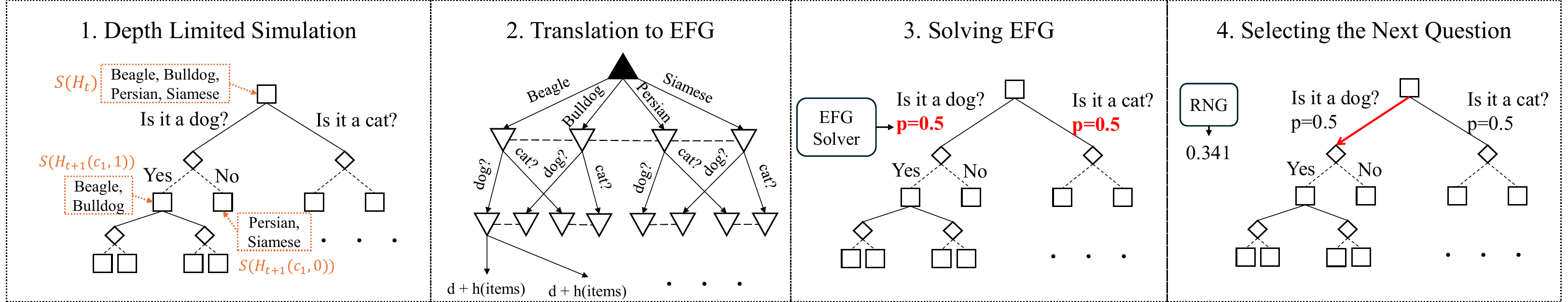}
\caption{\textbf{An overview of GoT.} To choose the question at time step $t$, we (1) perform depth limited simulation (2) Construct a subgame (3) Solve it for a local strategy which (4) informs the choice of the question. Steps 1 to 3 devise the strategy, while step 4 plays the game.}
\label{fig:method_overview}
\end{figure*}

The space of natural language questions is naturally large. Thus for practicality, we focus on the restricted variants of SLS with a small candidate questions set size $m$.
Nonetheless even under this restriction, explicitly constructing and solving the full (W)SLSR game tree is impractical for larger hypothesis spaces, as it entails querying an LLM at every infoset to propose questions and provide answers, with each call incurring multi-second latency. In our experiments, building the complete tree for a 25-item set required 5–6 hours to construct 3763 infosets. For larger datasets (approximately 100 items), this quickly becomes infeasible.

To address this challenge, GoT employs an iterative, on-demand strategy construction paradigm, computing strategies only when an infoset is reached. Inspired by the subgame search techniques used in poker bots \cite{brown2018superhuman, brown2019superhuman}, GoT constructs a subgame rooted at the current infoset and truncates it to a fixed depth $d$. The leaf nodes of this subgame are evaluated using heuristic functions. This truncated subgame is then solved to obtain a local strategy for the questioner, which informs the selection of the next question. An overview is provided in Figure~\ref{fig:method_overview}.

\paragraph{1. Depth Limited Simulation} We begin by exploring the possible future outcomes and questions for the Questioner. 
At infoset $I(H_t)$, a set of candidate questions $g(S(H_t))$ is generated using a LLM. For each candidate, we use a LLM as $f$ to identify the set of items for which the answer to $c_i$ is yes, denoted by $Y(S(H_t), c_i)$, where
\begin{align*}
 Y(S, q) =  \{ s \in S | f(q, s) = 1\} \text{ for some } S\subseteq\mathcal{S}.
\end{align*}
and the complement set denoted by $\bar{Y}(S(H_t), c_i)=S(H_t)\backslash Y(S(H_t),c_i)$. This pair of sets represents the two possible outcomes of asking the candidate question $c_i$ at step $t$, with $Y(S(H_t), c_i)$ (resp. $\bar{Y}(S(H_t), c_i)$) representing the set of items consistent with the history $H_t'(c_i, 1)$ (resp. $H_t'(c_i,0)$). The corresponding infosets $I(H_{t}'(c_i, 1))$, $I(H_{t}'(c_i, 0))$ are constructed based on this pair of outcomes. This process is repeated to construct the infosets for $d$ steps, resulting in a simulation tree as seen in step 1 of Figure~\ref{fig:method_overview}. Note that this step is similar to the question simulation and generation step of the UoT framework, and incurs a similar number of LLM calls when the same $d$ and $m$ are used.

\paragraph{2. Translation to EFG} We then construct a subgame based on the simulation tree. The tree is converted into an EFG representation of the truncated subgame, as shown in step 2 of Figure~\ref{fig:method_overview}. Note that the subgame begins with the Item Chooser ``re-choose'' a new distribution over $S(H_t)$ at each iteration even if $I(H_t)$ is not at the beginning of the game (i.e. $t\neq0)$, which seemingly accord more power to the Item Chooser.
This lets us perform \textit{safe} subgame search \cite{moravcik2016refining}, which is discussed in the appendix.
Each leaf node in the game tree $l$ is assigned a payoff of $d(l)-1 + h(l)$ for the Item Chooser (and the negative of that for the Questioner) where $d(l)$ is the depth of node $l$ in the tree representing the number of questions asked after step $t$ before reaching the node $l$, and $h$ is a heuristic function that estimates the number of questions that remain to be asked for the Questioner to identify the correct item. In our experiments for SLSR we set $h(l) :=\log_2(|S(l)|)$, which provides an optimistic estimation on the remaining turns.

\paragraph{3. Extensive Form Game Solving} To solve the subgame, we used LiteEFG's \cite{liu2024liteefgefficientpythonlibrary} implementation of CFR minimization to obtain an approximate NE.

\paragraph{4. Selecting the Next Question}
To choose the question to ask at step $t + 1$, one question $q_{t+1}$ is sampled from $g(S(H_t))$ using the the Questioner strategy from the NE. We move to the infoset $I(H_t'(q_{t+1},0))$ if $f(q_{t+1}, s^*)=0$, and $I(H_t'(q_{t+1},1))$ otherwise. 

This process is repeated until an infoset $I(H_n)$ where $|S(H_n)|=\{s^*\}$ is reached. Note that most of the computational cost of GoT is incurred in Step 1 due to LLM response latency. The remaining steps contribute negligibly.

\begin{theorem}
    GoT is safe with respect to value estimates that only depend on $S(H_t)$.
    \label{thm:safe_subgame}
\end{theorem}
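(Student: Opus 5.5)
Safety is measured against a blueprint. Fix any value estimate $v$ that depends on the history only through $S(H)$. We compare GoT's re-solved strategy with the blueprint strategy $\sigma$ that solves the depth-limited game from the previous step using $v$ at the leaves. Safety then means the following: replacing $\sigma$ by GoT's re-solved strategy in the subgame at $I(H_t)$ does not increase the Questioner's exploitability. Equivalently, for every item $s\in S(H_t)$, the worst-case cost to the Questioner from $I(H_t)$ onward is no larger than under $\sigma$.

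The key structural observation is that the continuation game below $I(H_t)$ depends on the history only through $S(H_t)$. This holds for any variant. By Assumption~\ref{ass:unique_ans}, answers $f(q,s)$ do not depend on the history. In SLSR the available questions $g(S(H))$ depend only on the consistent set, and leaf payoffs are $|H|$ (times $w(s^*)$ in the weighted case). Hence the subtree rooted at any state $(s,H_t)$ is isomorphic to a fresh game on $S(H_t)$. Its payoffs are shifted by the additive constant $|H_t|$ (scaled by $w(s)$ in the weighted case), which is common to every leaf for a fixed $s$. Together with the assumption that $v$ depends only on $S(H)$, this means the truncated subgame GoT builds is exactly the restriction of the blueprint's depth-limited game to the paths through $I(H_t)$, up to this per-item constant.

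The main step, and the one I expect to be the main obstacle, is handling the ``re-choose'' root, where the Item Chooser may pick any distribution over $S(H_t)$. I would use the standard argument of \cite{moravcik2016refining}. First, give the Item Chooser an unrestricted choice over $S(H_t)$; this is a relaxation that only strengthens the opponent. The Item Chooser acts only once, at the root, and the Questioner has perfect recall of the answers. So every opponent state in $I(H_t)$ corresponds to a single item $s$, and the opponent's counterfactual values reduce to per-item costs. By the minimax theorem, the re-solved Questioner strategy $x'$ minimizes $\max_{s\in S(H_t)} u_{H_t}(x',s)$. In particular, it does at least as well as the blueprint's restriction $\sigma|_{I(H_t)}$, which is feasible in the same subgame. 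For the weighted variant I would check that the per-item scaling by $w(s)$ preserves this comparison, which is immediate since each item is compared separately. This is stronger than the gadget constraint of \cite{moravcik2016refining}: every item's worst-case value is bounded, not only a margin.

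The final step is to combine the per-infoset guarantee by induction over the sequence of infosets GoT visits. Replacing the strategy at $I(H_t)$ leaves the rest of the blueprint unchanged, and the parts of the tree outside $I(H_t)$ are unaffected because the subgame is disjoint. Therefore the Item Chooser's best-response value over the whole game cannot increase at any step. This yields safety with respect to value estimates that only depend on $S(H_t)$, as claimed in Theorem~\ref{thm:safe_subgame}.
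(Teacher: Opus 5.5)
There is a genuine gap in your main step. The minimax argument only shows that the re-solved $x'$ minimizes $\max_{s\in S(H_t)} u_{H_t}(x',s)$, so $\max_s u_{H_t}(x',s)\le\max_s u_{H_t}(\sigma,s)$. It does not show $u_{H_t}(x',s)\le u_{H_t}(\sigma,s)$ for each $s$. That per-item inequality is what your definition of safety requires, and your final induction step depends on it. For example, with four items remaining, $\sigma$ might ask an identity question first, giving subgame costs $(1,2,3,3)$, while $x'$ uses two even splits, giving $(2,2,2,2)$. The maximum drops, but item $1$ gets worse. If item $1$'s cost on branches outside $I(H_t)$ already makes it the Item Chooser's global best response, the substitution raises exploitability. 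This is exactly how unsafe resolving fails, and the per-infoset margin shift in the gadget of \cite{moravcik2016refining} is what prevents it. So your remark that plain min-max is ``stronger than the gadget constraint'' is backwards.

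The hypothesis that the value estimate depends only on $S(H_t)$ must enter at exactly this point, and in your proposal it does not. The paper identifies GoT's ``re-choose'' root with the maxmargin gadget. Because the Item Chooser moves once and may be given perfect information, its head infosets are the single states $(s,H_t)$, and maxmargin shifts each head's payoffs by its baseline value. When that baseline is an estimate $v(S(H_t))$, the shift is the same constant for every $s$. Maximizing $\min_s\bigl(v(S(H_t))-u_{H_t}(x',s)\bigr)$ is then the same problem as minimizing $\max_s u_{H_t}(x',s)$, and a nonnegative margin gives $u_{H_t}(x',s)\le v(S(H_t))$ for every item. Safety therefore holds relative to the estimate, not relative to the per-item costs of the previous step's solution $\sigma$. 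Those costs generally differ across items, and for them the claim can fail as shown above.

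Separately, your ``exact restriction'' claim is also wrong. The new subgame is truncated one level deeper than the previous step's game. So $\sigma|_{I(H_t)}$ is not a complete strategy in it, and it is not evaluated with the same leaf payoffs.
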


\section{Experiments and Results}
We are interested in answering the following:
\textit{Does GoT improve the \textbf{worst-case} performance compared to other methods in all settings?}
We also examine how the (i) quality of the questions (ii) simulation depth $d$ affect its performance, and (iii) the trade-offs between average and worst-case.

\subsection{Experimental Setup}

\subsubsection{Datasets and Settings}
\textbf{20 Questions(20Q)}: Each dataset is a collection of distinct items. Prior studies \cite{bertolazzi-etal-2023-chatgpts, zhang-etal-2024-probing} used datasets such as \textit{Things} \cite{herbet-etal-2019-things} and \textit{Celebrities}, which we found to be too large for this study. This led to us constructing several datasets: (i) \textit{Common}: A set of 136 items built upon the dataset collected for UoT \cite{Hu2024UncertaintyOT}, which we further modified by adding more items. (ii) \textit{Breeds}: A set of 25 popular cat and dog breeds collected by us. (iii) \textit{S128}: A set of 128 items, consisting of 2 weapons, 6 scientists, 24 dishes, and 96 animals collected by us. In addition to the setting of 20Q, we also present results in two more practical domains. \textbf{Medical Diagnosis(MD)}: A doctor seeks to diagnose a patient by asking about the symptoms they are experiencing. We use \textit{DXBench(DX)}~\cite{chen2024codinterpretablemedicalagent}, and randomly select 100 out of the 461 unique diseases in the dataset as the hypothesis space. \textbf{Troubleshooting(TS)} A car mechanic attempts to help a customer diagnose the car fault they are experiencing by asking about the symptoms. We use \textit{FloDial}~\cite{raghu-etal-2021-end}, consisting of 2,738 dialogs grounded on 12 different troubleshooting flowcharts. We obtained 59 unique car faults after data preprocessing, which we used as the hypothesis space. Details are deferred to the appendix.

\subsubsection{Baselines}
We primarily compare with UoT \cite{Hu2024UncertaintyOT}. For both UoT and GoT, we set $d = m = 3$ unless otherwise stated. We additionally consider Direct Prompting (DP) where the LLM is asked to generate the next question, and Direct Choice (DC) where the LLM is asked to choose from a set of candidate questions. Each strategy is played against all possible $s^*$ in each dataset and the worst performance across all items is taken: $L_{worst} = \max_{s\in\mathcal{S}}|H^s|$, where $H^s$ is the interaction history of a method when $s$ is the chosen item.

\subsubsection{Models and Prompts}

We mainly experimented on GPT 4.1 \cite{openai2024gpt4.1} (\texttt{gpt-4.1-2025-04-14} checkpoint) and Qwen 2.5 72B Instruct \cite{Yang2024Qwen25TR}. In 20Q, we utilized two different prompts for sampling questions. 
The \textit{even} prompt, adopted from \cite{Hu2024UncertaintyOT}, explicitly instructs the LLM to ask questions that split items as evenly as possible.
When using this prompt, we noticed unnatural questions (e.g., Does this item begin with a letter from A-M? ). These questions are ill-suited for realistic settings as respondents may find it difficult to answer. This leads to the \textit{natural} prompt, which instructs LLMs to refrain from asking such questions. All results reported below use the \textit{natural} prompt. Comparison between the two prompts is deferred to the appendix. This issue was not observed in the MD and TS settings, possibly due to the explicit instruction for LLM to assume the role of a doctor/mechanic being included in the prompt. 

\subsubsection{Accounting for Randomness}
\textbf{(1)} When a LLM is used as $g$ to generate questions, the set of candidate questions $\mathcal{Q}(S(H_t))$ may differ across runs. To provide a fair comparison between similar methods, we first play the game using GoT and cache the questions in the simulation tree. The cached questions are reused for DC and UoT. \textbf{(2)} The Questioner strategy designed by GoT is nondeterministic. To obtain the performance for each $s^*$, we average over ten plays of the game using the same strategy. 

\subsection{Unweighted Variant}
As shown in \cref{tab:dls_main}, GoT consistently outperforms UoT in terms of worst-case interaction length. The prompting-based DP occasionally outperforms UoT, suggesting that while UoT can improve average-case performance, it may do so at the expense of worse worst-case performance.

\begin{table}[ht]
\centering
\begin{tabular}{c c c c c c}
    \multirow{2}{*}{\textbf{Method}} & \multicolumn{3}{c}{\textbf{20Q}} & \textbf{MD} & \textbf{TS} \\
& \textbf{Common} & \textbf{S128} & \textbf{Breeds} & \textbf{DX} & \textbf{FloDial} \\
\multicolumn{6}{c}{\cellcolor{gray!15}\textbf{GPT 4.1}} \\
GoT& \underline{10.2} & \underline{11.8} & \underline{7.4} & \underline{12.2} & \underline{7.9}\\
UoT & 11 & 13 & 9 & 13 & 9\\
DP & 13.8 & 16.2 & 7.8 & 16.8 & 12.7\\
DC & 12.9 & 14.6 & 9.3 & 16.2 & 11.6\\
\multicolumn{6}{c}{\cellcolor{gray!15}\textbf{Qwen 2.5 72B Instruct}} \\
GoT& \underline{10.0} & \underline{10.8} & \underline{6.6} & \underline{10.5} & \underline{7.5} \\
UoT & 11 & 12 & 8 & 12 & 9 \\
DP & 12.7 & 19.2 & 8.0 & 12.4 & 10.7 \\
DC & 12.7 & 17.9 & 7.8 & 13.3 & 10.5 \\
\end{tabular}
\caption{\textbf{Worst case interaction length for each method in various settings}. Best performance for each setting is \underline{underlined}.}
\label{tab:dls_main}
\end{table}

From Theorem~\ref{thm:even-split-opt-sls}, we know that if a LLM is always able to propose questions that perfectly split $S(H)$ into even halves, we would be able to achieve the optimal performance by always asking such a question.
However, we see that this is not the case: all methods fall short of the theoretical optimal performance of $\log_2(|\mathcal{S}|)$ by around 2 to 3 turns of interaction. This suggests that in practice, particularly in realistic settings such as medical diagnosis or troubleshooting, it may be difficult to phrase a sufficiently natural question that achieves an even split. In such cases, considering multiple candidate questions and adopting a randomized selection strategy, as GoT does, can hedge against outliers in which identifying a particular disease or fault requires an excessively long conversation in the worst-case.

\subsection{Weighted Variant}

We further examine the performance in the weighted variant of each setting. GoT now formulates the setting as a WSLSR game, and replaces the heuristic function used in the subgames with $h(l) = \max_{s\in S(l)}{w(s)} \cdot (d(l) + \log_2(|S(l)|))$ to reflect the change in payoff to $w(s^*)\cdot|H|$. For all methods the question sampling prompt is modified to include the item weights and the payoff calculation so as to inform the LLM.
For MD and TS, we use GPT-5.2 Thinking (with extended reasoning) to annotate each item with an integer weight from 1 to 10 reflecting the severity and urgency of the corresponding disease or fault, with 1 being the least severe and 10 being the most. Since an analogous notion is not available in 20Q, we instead sample item weights for this setting from a lognormal distribution with parameters $\mu=0$ and $\sigma=1$. A visualization of the distribution of items weights is provided in the appendix.

\begin{table}[ht]
\centering
\begin{tabular}{c c c c c}
\multirow{2}{*}{\textbf{Method}} & \multicolumn{2}{c}{\textbf{20Q}} & \textbf{MD} & \textbf{TS} \\
& \textbf{Common} & \textbf{Breeds} & \textbf{DX} & \textbf{FloDial} \\
\multicolumn{5}{c}{\cellcolor{gray!15}\textbf{GPT 4.1}} \\
GoT & \underline{152.1} & \underline{23.2} & \underline{78.3} & \underline{61.4} \\
UoT & 227.4 & 32.1 & 110.0 & 81.0 \\
DP & 224.0 & 36.9 & 116.0 & 90.1 \\
DC & 199.2 & 41.1 & 126.4 & 97.4 \\
\multicolumn{5}{c}{\cellcolor{gray!15}\textbf{Qwen 2.5 72B Instruct}} \\
GoT & \underline{151.9} & \underline{32.3} & \underline{73.6} & \underline{62.3} \\
UoT & 228.5 & 47.0 & 99.0 & 74.0 \\
DP & 235.7 & 47.8 & 114.3 & 80.0 \\
DC & 225.1 & 45.7 & 86.1 & 113.6 \\
\end{tabular}
\caption{\textbf{Worst case performance of various methods in the weighted variant, as measured by $\max_{s\in\mathcal{S}}(w(s)\cdot|H_s|)$}. Best performance for each setting is \underline{underlined}.}
\label{tab:wslsr_main}
\end{table}

The results can be seen in \cref{tab:wslsr_main}. GoT outperforms UoT in all settings, with improvement ranging from 15\% to 40\%. Another observation is that UoT's improvement over pure prompting based methods such as DP and DC is less obvious in the weighted variant. This is likely due to the UoT's approach of maximizing information gain without considering the item weights, which may lead to situations where questions yielding higher information gain (under the assumption of a uniform distribution) are preferred over those that more efficiently isolate heavily weighted items, leading to a less optimal strategy in WSLSR.

\begin{figure}[ht]
\centering
\includegraphics[width=1.0\columnwidth]{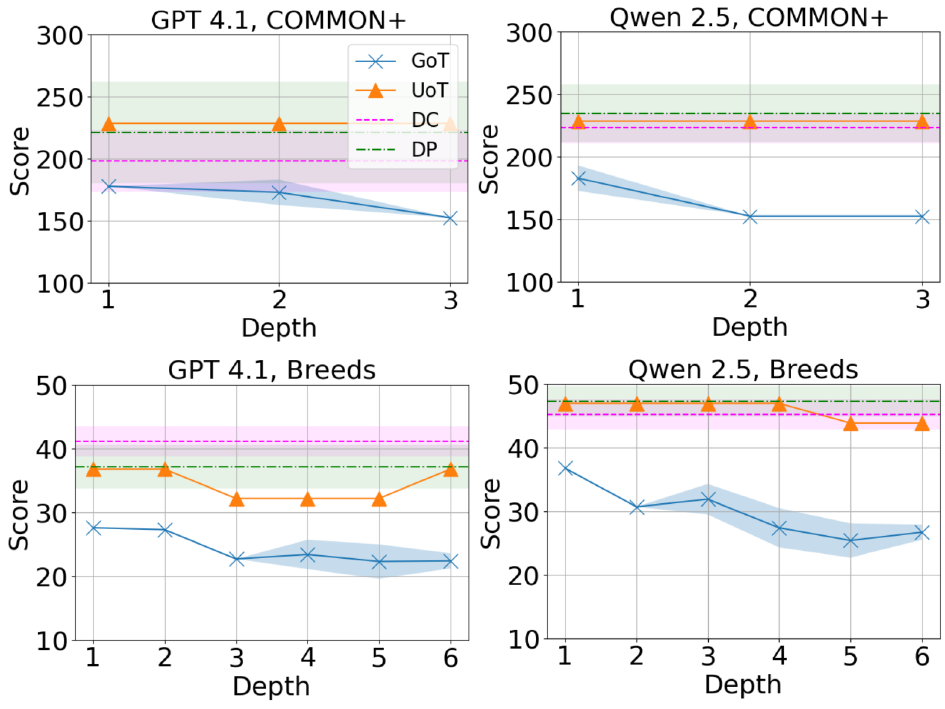}
\caption{\textbf{Worst-case performance of various methods in Weighted Variant on Common and Breeds.} The x-axis is $d$, the y-axis is the payoff for the Item Chooser $w_i \cdot|H|$.}
\label{fig:WSLSR}
\end{figure}

We further study the effect of increasing the simulation depth $d$ for both UoT and GoT; the results are shown in \cref{fig:WSLSR}. Overall, GoT improves as $d$ increases, but the gains plateau as it approaches what we hypothesize to be the optimal strategy for the fixed set of LLM-sampled questions. In contrast, UoT’s worst-case performance shows little to no improvement even with deeper lookahead, indicating that optimizing for information gain does not necessarily translate into better worst-case guarantees.

\begin{figure}[ht]
\centering
\includegraphics[width=1\columnwidth]{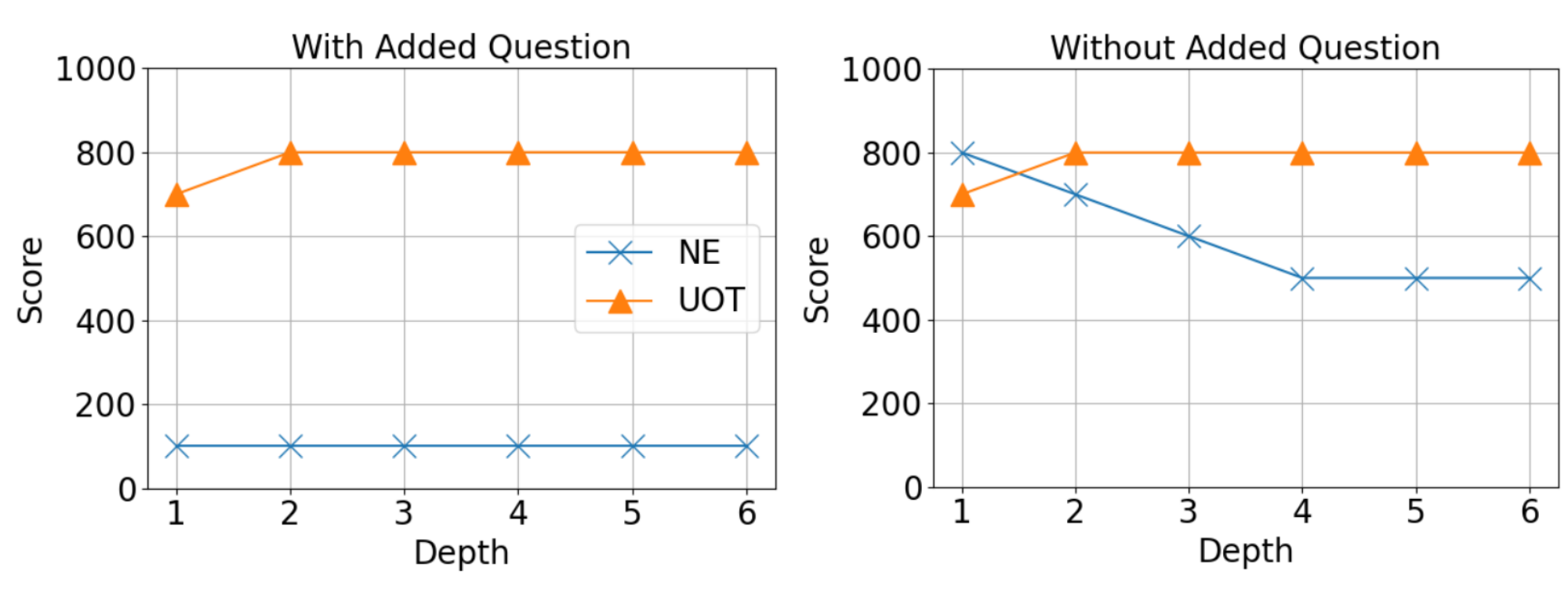}
\caption{\textbf{Performance in the weighted variant with artificially skewed weights on Breeds.} Graph on the left shows the performance when the additional question is included.}
\label{fig:skewed_weights_forced_question}
\end{figure}

We conducted a brief ablation to examine the impact of question quality in WSLSR. Specifically, we created an artificially skewed weight distribution on the Breeds dataset by assigning weight 1 to all but one item and weight 100 to the remaining item. Under this construction, the optimal first query is to test whether the highest-weight item is the target. We verify this by manually injecting the corresponding question into the initial candidate set. As shown in Figure~\ref{fig:skewed_weights_forced_question}, GoT consistently reaches this optimal strategy. In contrast, when this question is not included, performance degrades substantially, indicating that GoT’s effectiveness can be sensitive to the quality of the candidate questions.

\subsection{Average Performance of Unweighted Variant}

While we primarily examine the worst-case performance, here we also present an analysis of the average conversation length on the DX dataset under a multinomial prior $P^o$, calculated as $L_{avg}=\sum_{s\in \mathcal{S}} P^o(s)\cdot |H^s|$. This represents a method’s expected conversation length when the underlying disease distribution follows $P^o$. We obtain 50 samples of $P^o$ by drawing from a Dirichlet distribution $Dir(kC)$ where $C \in \mathbb{Z}_{100}$ is the occurrence count of each disease in the dataset, and $k\in \mathbb{R}_+$. $L_{avg}$ for each method is plotted in \cref{fig:average_case_worst_case} alongside $L_{worst}$ to illustrate the trade-off between average and worst-case performance. We also consider the setting where the item is not chosen adversarially, but rather follows a publicly known $P^o$, which allows the Questioner to specifically optimize for that prior. We refer to this strategy as the best response (BR) and it serves as a theoretical lower bound on performance. Details are in the appendix.

\begin{figure}[ht]
\centering
\includegraphics[width=1\columnwidth]{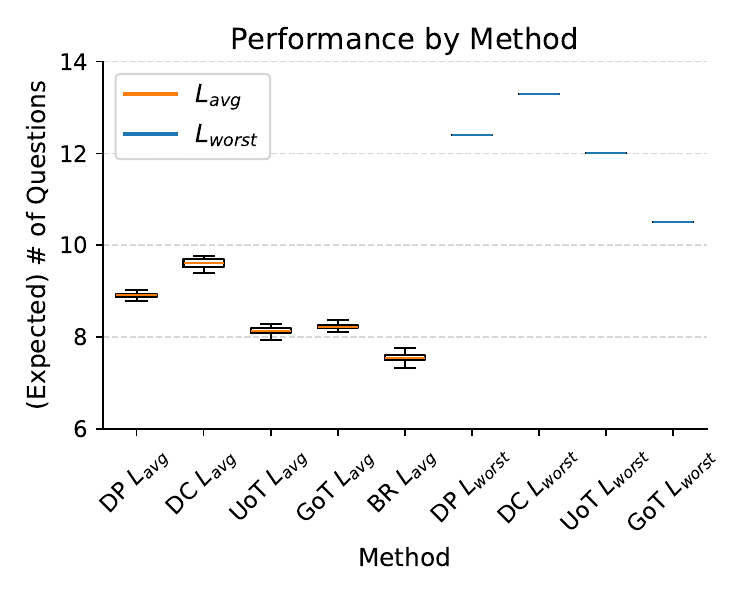}
\caption{\textbf{Average and Worst-Case Performance on DX.} The model used is Qwen 2.5 72B Instruct, with $k$ set to 1. Note that the variance in the average performance is due to different samples of $P^o$, not the randomness in the methods themselves.}
\label{fig:average_case_worst_case}
\end{figure}

From \cref{fig:average_case_worst_case}, we highlight two key observations. (1) UoT and GoT have similar average-case performance: their gap is small relative to their gains over DP/DC and to BR’s improvement over both. (2) GoT’s worst-case advantage over UoT is substantial: comparable in magnitude to UoT’s average-case gain over DP, the metric UoT explicitly targets. Overall, GoT delivers meaningful worst-case improvements with little loss in average-case performance relative to UoT.

We further evaluate $L_{avg}$ for UoT and GoT under two families of multinomial priors, chosen to be unfavorable to each method respectively. Specifically for each method, we consider $X_{method}\sim Dir(k\alpha)$, where $\alpha\in\mathbb{Z}_{100}$ satisfies $\forall i\neq0, \alpha_i=1$. We set $\alpha_0$ to concentrate the distribution around a single item. In particular, we let index $0$ correspond to the item that yields the worst case conversation length for that method. We draw 100 samples from each configuration and compare the $L_{avg}$ for UoT and GoT. 

\begin{figure}[ht]
\centering
\includegraphics[width=1\columnwidth]{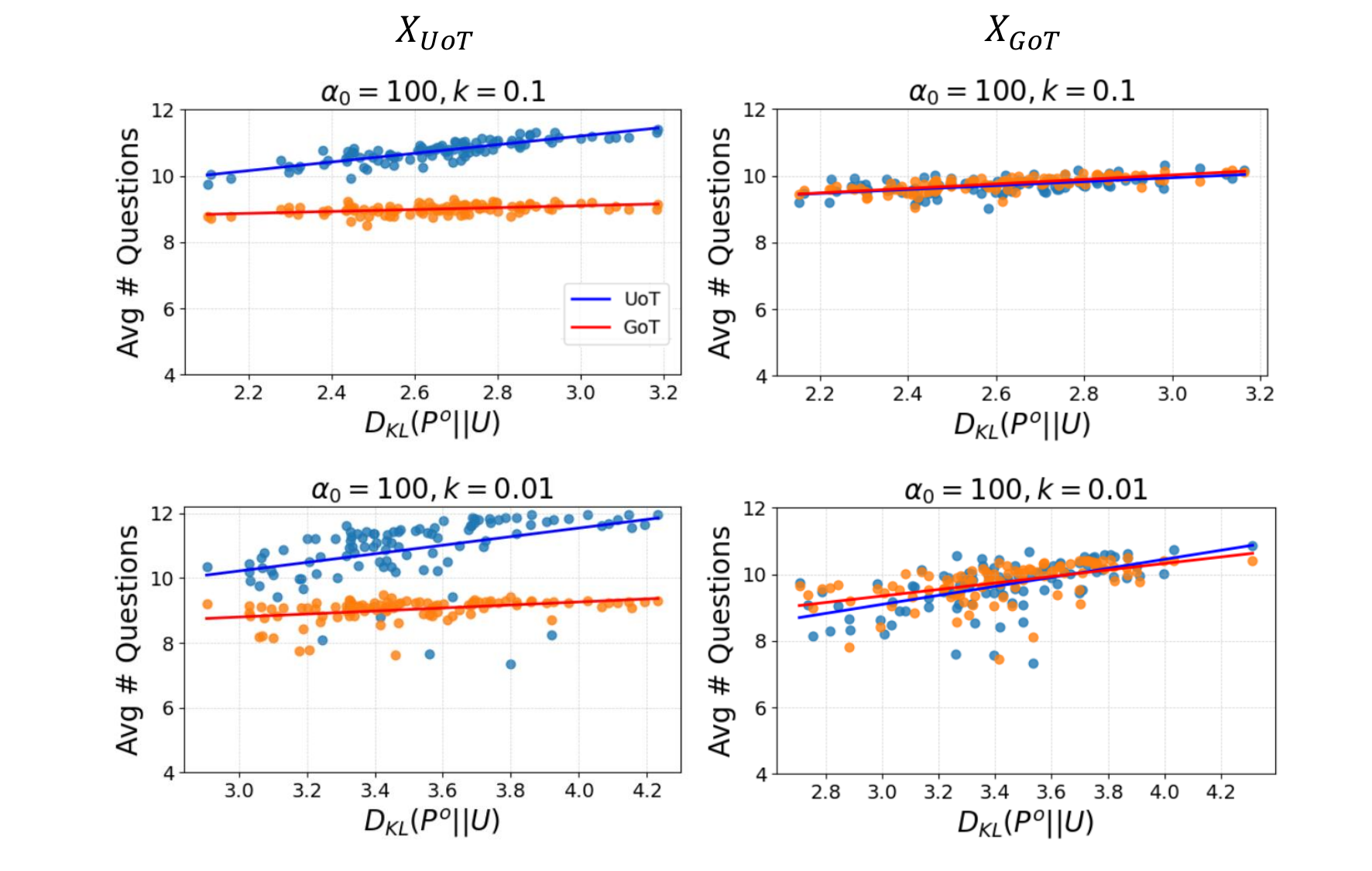}
\caption{\textbf{Average Performance in various family of distributions} Left column corresponds to priors unfavorable to UoT, while right column corresponds to priors unfavorable to GoT. X-axis is the KL divergence of the prior from the uniform distribution.}
\label{fig:average_case_scatter}
\end{figure}

As seen in \cref{fig:average_case_scatter}, for priors sampled from $X_{UoT}$, the two methods diverge markedly, with UoT's performance degrading much more than GoT. In contrast, for priors sampled from $X_{GoT}$, there is no substantial gap in average performance. Together, these results suggest that GoT is more robust to unfavorable priors than UoT.

\section{Limitations and Future Work}

We limited the candidate questions to have strictly binary answers. How this can be extended to include questions with open ended responses is left as future work.  

\section{Conclusion}

We formalized the definition of SLS and its variants, and proposed GoT, a simple yet effective approach to play SLSR. We show that GoT is able to improve the worst-case performance as compared to prior methods. Our approach of optimizing for the worst-case performance can often be more valuable than to optimize for the average performance. 

\section{Impact Statement}

This paper presents work whose goal is to advance the field of machine learning. There are many potential societal consequences of our work, none of which we feel must be specifically highlighted here.

\bibliography{refs}
\bibliographystyle{icml2026}

\newpage
\appendix
\section{Strategy Representation}
There are several ways to represent strategies (up to payoff equivalence) in imperfect information games with perfect recall. For computational reasons, the \textit{sequence form} is often used . We briefly describe the other representations.
\begin{enumerate}
    \item Normal/Strategic Form. The was what was presented in the paper. The set of deterministic strategies is the cartesian product of actions at each infoset $\Pi = \prod_{I \in \mathcal{I}} \mathcal{A}(I)$. Thus, every policy $\pi \in \Pi$ tells us exactly what action to take at every information set that could be encountered. Randomized strategies are distributions over deterministic policies, i.e., $\Delta_\Pi$. The size of $\Pi$ is exponential in the number of infosets. 
    \item Reduced Normal Form. The reduced normal form prunes the number of normal form strategies by grouping together strategically equivalent ones. See any textbook in game theory, or the work by \citet{von1996efficient} for a more complete discussion.
    Let $I$ be an infoset and $\alpha, \alpha' \in \mathcal{A}(I)$ be distinct actions in $I$. Let $I'$ be an infoset that $\alpha'$ precedes. Then, normal form strategies that choose $\alpha$ in $I$ have payoffs (regardless of opponent strategy) is independent of what action is taken in $I'$, since to enter $I'$ at all would require choosing $\alpha'$. Thus, two strategies $\pi, \pi'$ that choose $\alpha$ in $I$ but differ in actions taken in $I'$ are strategically equivalent. The minimal set of strategies that are obtained by forming such equivalence classes forms the set of reduced-normal form strategies, which can be significantly smaller than normal form ones. As before, strategies can be randomized in reduced normal form as well. Unfortunately, the size of reduced normal form strategies can still be exponential in the number of infosets.
    \item Behavioral Form. The behavioral strategies involve placing a \textit{distribution} of actions (possibly not deterministic) at each information set, thus the size is linear in the total number of actions summed over all infosets, which in turn is no larger than the size of the game tree. It can be shown using \textit{Kuhn's theorem} that under perfect recall, the set of behavioral strategies are strategically equivalent to (reduced) normal form strategies. Unfortunately, while behavioral strategies are intuitive and compact, optimizing in behavioral form is difficult as payoffs are non-linear in the strategy representation. 
    \item Sequence Form. The sequence form strategy alleviates the problems assigning probabilities to sequences $\sigma$ (essentially actions when the game has perfect recall) of taking a particular sequence in isolation from chance and other players. At each infoset $I$, the sequence form is essentially identical to the behavioral form (a probability simplex) except that they are normalized by the probabilities given by the infoset's parent sequence $\sigma(I)$, which is the last action (sequence) taken before reaching $I$ --- this is guaranteed to be unique because of perfect recall. Thus, the sequence form is the same size as behavioral strategies (except for an extra ``empty sequence'' $\phi$ set to $1.0$ that is the parent of all initial infosets. The sequence form strategy space is sometimes known as the treeplex
    \begin{align*}
        \mathcal{X} = \left\{ 
        x \in \mathbb{R}_+^N \bigg| 
        x[\phi] = 1;
        \sum_{\sigma = Ia} x[\sigma] = x[\sigma(I)] \ \forall \ I \in \mathcal{I}
        \right\}
    \end{align*}
    where $N = 1 + \sum_{I \in \mathcal{I}} |I|$ and $\mathcal{I}$ are the set of infosets, $Ia$ refers to the sequences ending with action $i$ starting at infoset $I$, and $\phi$ is the empty sequence. The vertices of $\Pi$ are correspond to (reduced) normal form strategies.
    using the sequence form, we can write the Nash equilibrium of a zero-sum game as the bilinear saddle point problem
    \begin{align*}
        \min_{x \in \mathcal{X}} \max_{y \in \mathcal{Y}} x^T M y
    \end{align*}
    where $M$ is the sequence form payoff matrix. Observe that the objective is linear in both $x$ or $y$. This saddle point can be found efficiently using a variety of methods, including first order methods such as the counterfactual regret minimization \cite{zinkevich2007regret}, linear programming or other first order methods (e.g., mirror prox). The library which we use \cite{liu2024liteefgefficientpythonlibrary} uses counterfactual regret minimization.

\end{enumerate}

\section{Detailed Derivations}
\subsection{Proof of Theorem~\ref{thm:np-complete-br-sls}}
This problem is equivalent to the following set-intersection problem.

\paragraph{Set intersection problem.} Let $\mathcal{S}$ be a finite set and $\mathcal{K} \subseteq 2^\mathcal{S}$ a set of sets 
Given some fixed $s \in \mathcal{S}$, we want to find if there is a set $\mathcal{J} \subseteq \mathcal{K}$ such that $|\mathcal{J}| \leq k$ and 
$\bigcap_{J \in \mathcal{J}} J = \{ s^* \}$, i.e., can we find a small subset of $\mathcal{J} \subseteq \mathcal{K}$ such that its intersection contains exactly $s^*$. 
One can see that $\mathcal{K}$ contains the set of questions that can be asked about $s^*$. We can safely assume that $\forall K \in \mathcal{K}, s^* \in K$. 

\paragraph{Set cover.} Let $\mathcal{A}$ be an arbitrary set and $\mathcal{B} \subseteq 2^\mathcal{A}$. The decision problem is whether one can find some set $B^* \subseteq \mathcal{B}$ of size $k$ or smaller such that $\bigcup B^* = \mathcal{A}$.

\paragraph{NP-completeness.}
We show that the set cover problem reduces to the set intersection problem. 

\begin{lemma}
    Set cover problem reduces to the Set intersection problem.
\end{lemma}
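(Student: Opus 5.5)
The reduction goes through complementation. Take a set cover instance $(\mathcal{A}, \mathcal{B}, k)$ and introduce a fresh element $s^* \notin \mathcal{A}$. Build the ground set $\mathcal{S} = \mathcal{A} \cup \{s^*\}$. For each $B \in \mathcal{B}$, define
\[
K_B = \mathcal{S} \setminus B = (\mathcal{A} \setminus B) \cup \{s^*\},
\]
and set $\mathcal{K} = \{K_B : B \in \mathcal{B}\}$. The target element is $s^*$ and the budget stays $k$. Every $K_B$ contains $s^*$, which matches the normalization assumed in the set intersection problem. The construction is clearly polynomial: one new element, and each set is complemented in time $O(|\mathcal{A}|)$.

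The correctness argument is a De Morgan identity. For any subfamily $B^* \subseteq \mathcal{B}$,
\[
\bigcap_{B \in B^*} K_B = \{s^*\} \cup \Bigl(\mathcal{A} \setminus \bigcup B^*\Bigr).
\]
So this intersection equals $\{s^*\}$ exactly when $\bigcup B^* = \mathcal{A}$. Since $B \mapsto K_B$ preserves cardinality of subfamilies (modulo repeated sets), $B^*$ is a cover of size at most $k$ if and only if $\{K_B : B \in B^*\}$ is a family of size at most $k$ intersecting to $\{s^*\}$. Duplicates cause no trouble, because a smaller family is still within budget.

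Two edge cases need handling. The first is the empty family: its intersection is conventionally $\mathcal{S}$, which equals $\{s^*\}$ only when $\mathcal{A} = \emptyset$, and in that case the empty cover also works, so the equivalence still holds. The second is membership in NP: a certificate is the chosen subfamily, and checking its intersection is polynomial.

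The last step ties this back to Theorem~\ref{thm:np-complete-br-sls}. Each $K \in \mathcal{K}$ is realized as a question $q_K$ with $f(q_K, s) = \mathbf{1}[s \in K]$. Asking $q_K$ under item $s^*$ yields answer $1$, and the consistent set becomes $S(H) = \bigcap K$. Hence a sequence of $k$ questions isolating $s^*$ is exactly a $k$-subfamily intersecting to $\{s^*\}$.

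I expect the main obstacle to be this translation to SLS rather than the combinatorics. The question set and $f$ must be finite and explicitly given, and Assumption~\ref{ass:can_always_split} must hold. I would enforce the assumption by including, if needed, a distinguishing question with large cost. Simpler still, the NP-hardness claim can be stated for instances in which $f$ is given as a table, where the assumption is not required for the decision problem. One must also allow $Q$ to have size at most $k$, since a sequence of exactly $k$ questions can be padded with repeats.
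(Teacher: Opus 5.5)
Your proposal is correct and uses the same reduction as the paper: adjoin a dummy element $s^*$, send each $B$ to $(\mathcal{A}\setminus B)\cup\{s^*\}$, and conclude by De Morgan. Your identity $\bigcap_{B\in B^*}K_B=\{s^*\}\cup(\mathcal{A}\setminus\bigcup B^*)$ gives both directions of the equivalence, whereas the paper's proof only argues that a solution of the intersection instance yields a cover.
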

\begin{proof}
    we define $\mathcal{S} = \mathcal{A}\sqcup{s^*}$ for some dummy element $s^*$ and $\mathcal{K}$ to contain the subsets that are complements of elements of $\mathcal{B}$, with $s^*$ added, $\mathcal{K}=\{K_i=\overline{B_i}\sqcup\{s^*\}|B_i\in\mathcal{B}\}$. Take a solution $K^*\subseteq\mathcal{K}$ to this set intersection problem and the corresponding subset $B^*=\{B_i|K_i\in K^*\}$.  Since $\bigcap_{i:K_i \in K^*}K_i=\{s^*\}$, then $\bigcup_{i:K_i \in K^*}\overline{K_i}=\mathcal{S}\backslash\{s^*\}$. Thus $\bigcup_{i:B_i \in B^*}B_i=\bigcup_{i:K_i \in K^*}\overline{K_i}=\mathcal{S}\backslash\{s^*\}=\mathcal{A}$.
\end{proof}

For completeness, we also show the other direction of reduction. 

\begin{lemma}
    Set intersection problem reduces to the Set cover problem.
\end{lemma}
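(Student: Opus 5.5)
We must show the Set intersection problem reduces to Set cover, by the reverse of the construction in the previous lemma: pass to complements.

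\textbf{Construction.} Take an instance $(\mathcal{S}, \mathcal{K}, s^*, k)$ of the Set intersection problem. As noted above, we may assume every $K \in \mathcal{K}$ contains $s^*$: any $K$ with $s^* \notin K$ can be discarded in a polynomial-time preprocessing step, since it can never belong to a family whose intersection is $\{s^*\}$. Define the Set cover instance by
\begin{align*}
\mathcal{A} = \mathcal{S}\setminus\{s^*\}, \qquad
\mathcal{B} = \{ B_K = \mathcal{S}\setminus K \mid K\in\mathcal{K}\},
\end{align*}
with the same budget $k$. Because $s^*\in K$, each $B_K\subseteq\mathcal{A}$. The map is clearly polynomial, one complement per set.

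\textbf{Correctness.} By De Morgan, for any $\mathcal{J}\subseteq\mathcal{K}$,
\begin{align*}
\mathcal{S}\setminus\bigcap_{K\in\mathcal{J}} K = \bigcup_{K\in\mathcal{J}} B_K .
\end{align*}
Since $s^*$ lies in every $K$, it lies in $\bigcap_{K\in\mathcal{J}} K$. Hence $\bigcap_{K\in\mathcal{J}} K=\{s^*\}$ if and only if $\bigcup_{K\in\mathcal{J}}B_K=\mathcal{S}\setminus\{s^*\}=\mathcal{A}$.

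The correspondence $\mathcal{J}\mapsto\{B_K: K\in\mathcal{J}\}$ does not increase cardinality. Distinct $K$ have distinct complements, so it is in fact a bijection between subfamilies. Therefore a size-$\le k$ solution exists for one instance exactly when it exists for the other.

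\textbf{Edge cases.} The only care needed is degenerate cases. If $\mathcal{J}=\emptyset$, the empty intersection is conventionally $\mathcal{S}$ and the empty union is $\emptyset$. These correspond correctly: $\mathcal{S}=\{s^*\}$ exactly when $\mathcal{A}=\emptyset$, so the equivalence holds there too. The other degenerate case is the assumption that all $K$ contain $s^*$, which is handled by the preprocessing step.

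Combined with the previous lemma and the fact that a proposed $\mathcal{J}$ is a polynomially checkable certificate (so the problem is in NP), this establishes the equivalence used for Theorem~\ref{thm:np-complete-br-sls}.
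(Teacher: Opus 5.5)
Your proposal is correct and uses the same construction as the paper: $\mathcal{A}=\mathcal{S}\setminus\{s^*\}$, with $\mathcal{B}$ the family of complements of the sets in $\mathcal{K}$, justified by De Morgan's law. The paper only argues the direction from a cover to an intersection solution, whereas you prove the full equivalence and also handle the preprocessing that guarantees $B_K\subseteq\mathcal{A}$ and the empty-family edge case.
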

\begin{proof}
    We define $\mathcal{A}=\mathcal{S} \setminus \left\{s^*\right\}$ and $\mathcal{B}$ contains subsets that are the complements of the elements of $\mathcal{K}$, $\mathcal{B} = \left\{ B_i = \overline{K_i} \mid K_i \in \mathcal{K} \right\}$. 
    Take a set cover $B^* \subseteq \mathcal{B}$ of $\mathcal{A}$ and its corresponding subset $K^*=\{K_i | B_i\in B^*\}$ of $\mathcal{K}$. Since $B^*$ covers $\mathcal{A}$, i.e. $\bigcup B^*=\mathcal{A}$, we have $\bigcap_{B_i \in B^*} \overline{B_i}=\phi$, and thus $ K^*$: $\bigcap_{i: K_i\in K^*} K_i =\left\{ s^* \right\} \cup \bigcap_{B_i \in B^*} \overline{B_i}=\left\{ s^* \right\}$.
\end{proof}

Finally, any solution to the set intersection problem can also be checked in polynomial time. 

\begin{remark}
    A very similar problem known as \textit{teaching dimension} was studied by \citet{goldman1995complexity}, and is in turn closely related to other related concepts in learning theory such as the VC-dimension.
\end{remark}

\subsection{Proof of Theorem~\ref{thm:even-split-opt-sls}}
It is clear that the even-split strategy uses exactly $k$ questions. Furthermore, every question yields exactly one bit of information, so at least $k$ questions are needed. This shows that the even-split strategy is a NE (that this is a minimax solution). To show that the uniform strategy is a maximin solution, simply apply symmetry. Let $y^* \in \Delta_{n}$ be some (potentially non-uniform) NE for the answerer. We know that the set of maximin solutions for a zero-sum matrix game forms a non-empty convex set. Take all permutations of $y^*$. By symmetry, they must all be NE as well. Let the average over all permutations be be $\overline{y^*}$, again by symmetry this is equal to the uniform distribution. But by the aforementioned convexity of the set of maximin solutions this is also a Nash equilibrium for the Answerer.

\subsection{Discussion of Theorem~\ref{thm:safe_subgame}}

Step 2 of GoT augment the subgame to allow the Item Chooser to ``re-choose'' a new distribution over $s^*$. At first glance, doing this rather than sticking to the original distribution chosen by the Item Chooser at the start of the game seems to give them significantly more power than in the original SLSR game, since it would be able to ``switch items'' depending on the questions asked in actual play.. 
In fact, this is a simple implementation of \textit{maxmargin resolving}, a crucial step in performing \textit{safe} subgame search \cite{moravcik2016refining} in general zero-sum EFGs with imperfect information. 
In these games, the optimal strategy of a subgame may depend on other unreached branches of the original game and as such, we cannot solve for a strategy for this subgame in isolation.
Performing this safe subgame resolving provides a performance guarantee for the Questioner's subgame strategy with respect to some blueprint strategy or state value estimate in the original game. 
Conversely if we were to not do this, the Questioner's subgame strategy will enjoy no such guarantees and may perform arbitrarily poorly.
In practice, we see an improvement over unsafe variants of subgame search during our preliminary investigation. 

We give a very brief overview of subgame search and maxmargin resolving. The finer details and mathematical formulation is omitted. The main goal is to establish that GoT is implemented in the same way that maxmargin resolving is, which in turn implies safety.
\paragraph{Safe subgame search}
Subgame search (also known as subgame resolving or continual resolving) is a class of methods used to perform depth-limited search in a principled manner in imperfect information extensive form games. 
The idea behind subgame search is to play a \textit{blueprint} strategy until the player enters a \textit{subgame} (an imperfect information subgame is a forest
of trees, closed under both the descendant relation and membership within augmented information sets for any player, \cite{burch2014solving}). The blueprint strategy is \textit{typically} the solution to a coarse, abstract version of the full game. Once inside the subgame, the player \textit{resolves} for a better solution for the subgame that was reached (and that subgame only). This is analogous to the perfect information case where one only performs search (or refinement) of a strategy in the state that was reached in actual play, since solving the original full game is intractable. 

A safe subgame solving algorithm is one that is guaranteed to perform no worse than the blueprint strategy. Naive subgame solving attempts to solve the subgame by constructing a gadget game starting with a chance node which leads to all initial states in the subgame based on the probabilities (under the blueprint and the opponent strategy whens solving the blueprint) of reaching them. This however, neglects the fact that resolving just the subgame that was reached could entice the opponent to change their action, rendering these chance probabilities incorrect, leading to a worse performance than the blueprint.

\paragraph{Maxmargin resolving}
Maxmargin resolving \cite{moravcik2016refining} is one such approach to perform subgame solving in a more principled fashion. The idea is to augment the gadget game such that the \textit{opponent} first chooses which infoset (of the opponent) it would like to begin from, after which, a chance node enforces the probabilities of reaching each of those states (belonging to the opponent's infoset) is, based on the blueprint strategy of the main player. Letting the opponent choose which infoset (note that we may have to add dummy infosets if the opponent does not move at the beginning of the subgame) they want to begin the subgame with ensures that the main player optimizes for the minimum \textit{margin}. Here, the margin for each of the opponent's infoset is the difference between the value in the infoset under the blueprint strategy versus the refined strategy. If the main player optimizes the value of a particular head infoset (of the opponent) too much at the expense of performing poorly at other infosets, then the opponent would choose those infosets instead, resulting in unsafe resolving. By allowing the opponent to choose initial infosets, Maxmargin avoids this explicitly by ensuring that all of the infosets are equally improved (for the main player) after refinement.

In practice, we do not actually know the values of initial states of the subgame under the best-response of the opponent is (without expanding the entire subgame to its leaves) and have to resort to approximate value functions. Note that it is not immediately clear what constitutes a good value function, since the value of a state will depend on play from both players and is complicated by imperfect information. See \citet{kovarik2021value} for a more thorough discussion. 

\paragraph{GoT as maxmargin resolving}
First, observe that SLSR is a relatively simple EFG in that the Answerer only moves once at the beginning, after which there are no further interactions from it. In fact, we could allow the Answerer to have perfect information. Furthermore, the proper subgames (i.e., not the full game) are simply the histories $H$, or equivalently, the subgame beginning at $I(H)$, which comprise all questions asked but not what $s^*$ was chosen. Note that each proper subgame only has the Questioner taking actions. Since the Answerer has perfect information the ``head infosets'' of a subgame $I(H)$ for the Answerer simply corresponds to the initial states of that history $H$. Note that we don't actually need to explicitly construct these head infosets: since the Answerer has full information, these are ``dummy'' infosets have essentially one action only, which lead to the corresponding state in the lead infoset of the Questioner.
This shows that the structure of the gadget game of maxmargin is exactly the same as what we propose. An example of this construction is shown in Figure~\ref{fig:efg-subgame}.

In maxmargin, the payoffs under each head infoset of the subgame is shifted by the payoffs under the blueprint (or function approximation). This ensures that the \textit{margin} is optimized as opposed to absolute payoffs. Thankfully, no such shift is necessary in our case because the approximated values at each state is $\log_2(|S(H)|)$, i.e., the log of the number of items remaining. This value is the same for \textit{every} state in the infoset $I(H)$, thus the value of the shift is the same over the entire subgame, which, as far as solving the subgame goes has no bearing on the refined strategy. 

This same argument also extends to our choice of value function for WSLSR, $h(l) = \max_{s\in S(l)}{w(s)} \cdot (d(l) + \log_2(|S(l)|))$, which depends only on the set of items remaining, $S(H)$. However, if our value function was chosen to also depend on the Answerer's precise choice of $s^*$, then such an argument would no longer hold and we would have to perform these shifts in payoffs accordingly.

\begin{remark}
    In theory, maxmargin resolving only performs resolving upon entering a subgame (e.g., after $d=3$ questions are asked). In practice, GoT performs resolving at every infoset that is encountered during actual play. 
\end{remark}

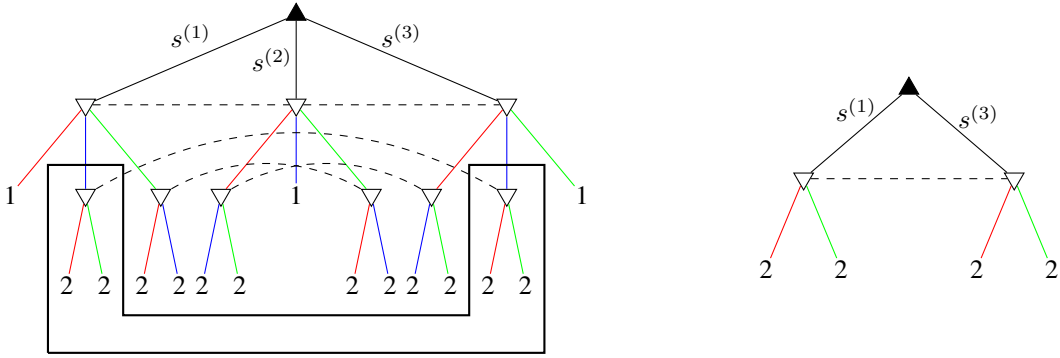
\begin{figure*}[ht]
\begin{minipage}[c]{0.47\linewidth}
\centering 
\begin{tikzpicture}[
  level distance=12mm,
  every node/.style={circle,draw,minimum size=2mm,inner sep=0pt},
  level 1/.style={sibling distance=28mm},
  level 2/.style={sibling distance=10mm},
  level 3/.style={sibling distance=5mm},
  invertednode/.style={
    regular polygon,
    regular polygon sides=3,
    draw,
    minimum size=3mm,
    inner sep=0pt,
    rotate=180,
  },
]
\usetikzlibrary{trees,shapes.geometric}

\node[regular polygon, regular polygon sides=3, draw, fill=black, minimum size=3mm, inner sep=0pt] {}
  child { node[invertednode] (v1){}
    child[edge from parent/.style={draw=red}] { node[draw=none] {1}
    }
    child[edge from parent/.style={draw=blue}] { node[invertednode] (v13a) {}
      child[edge from parent/.style={draw=red}] { node[draw=none] {2} }
      child[edge from parent/.style={draw=green}] { node[draw=none] {2} }
    }
    child[edge from parent/.style={draw=green}] { node[invertednode] (v12a){}
      child[edge from parent/.style={draw=red}] { node[draw=none] {2} }
      child[edge from parent/.style={draw=blue}] { node[draw=none] {2} }
    }
    edge from parent node[draw=none,pos=0.5,above] {$s^{(1)}$}
  }
  child { node[invertednode] (v2){}
    child[edge from parent/.style={draw=red}] { node[invertednode] (v23a){}
      child[edge from parent/.style={draw=blue}] { node[draw=none] {2} }
      child[edge from parent/.style={draw=green}] { node[draw=none] {2} }
    }
    child[edge from parent/.style={draw=blue}] { node[draw=none] {1}
    }
    child[edge from parent/.style={draw=green}] { node[invertednode] (v12b){}
      child[edge from parent/.style={draw=red}] { node[draw=none] {2} }
      child[edge from parent/.style={draw=blue}] { node[draw=none] {2} }
    }
    edge from parent node[draw=none,pos=0.5,left] {$s^{(2)}$}
  }
  child { node[invertednode] (v3){}
    child[edge from parent/.style={draw=red}] { node[invertednode] (v23b){}
      child[edge from parent/.style={draw=blue}] { node[draw=none] {2} }
      child[edge from parent/.style={draw=green}] { node[draw=none] {2} }
    }
    child[edge from parent/.style={draw=blue}] { node[invertednode] (v13b) {}
      child[edge from parent/.style={draw=red}] { node[draw=none] {2} }
      child[edge from parent/.style={draw=green}] { node[draw=none] {2} }
    }
    child[edge from parent/.style={draw=green}] { node[draw=none] {1}
    }
    edge from parent node[draw=none,pos=0.5,above] {$s^{(3)}$}
  };
\draw[dashed] (v1) -- (v2);
\draw[dashed] (v2) -- (v3);
\draw[dashed] (v23a) to[out=30,in=150] (v23b);
\draw[dashed] (v13a) to[out=30,in=150] (v13b);
\draw[dashed] (v12a) to[out=30,in=150] (v12b);

\draw[thick] 
(-3.3,-4.5) -- (-3.3,-2) -- (-2.3,-2) -- (-2.3,-4.0) -- (2.3,-4.0) -- (2.3,-2) -- (3.3,-2) -- (3.3,-4.5) -- (-3.3,-4.5);
\end{tikzpicture}
\end{minipage}
\begin{minipage}[c]{0.47\linewidth}
\centering
\begin{tikzpicture}[
  level distance=12mm,
  every node/.style={circle,draw,minimum size=2mm,inner sep=0pt},
  level 1/.style={sibling distance=28mm},
  level 2/.style={sibling distance=10mm},
  level 3/.style={sibling distance=5mm},
  invertednode/.style={
    regular polygon,
    regular polygon sides=3,
    draw,
    minimum size=3mm,
    inner sep=0pt,
    rotate=180,
  },
]
\usetikzlibrary{trees,shapes.geometric}

\node[regular polygon, regular polygon sides=3, draw, fill=black, minimum size=3mm, inner sep=0pt] {}
  child { node[invertednode] (v1){}
    child[edge from parent/.style={draw=red}] { node[draw=none] {2} }
    child[edge from parent/.style={draw=green}] { node[draw=none] {2} }
        edge from parent node[draw=none,pos=0.5,above] {$s^{(1)}$}
  }
  child { node[invertednode] (v3){}
      child[edge from parent/.style={draw=red}] { node[draw=none] {2} }
      child[edge from parent/.style={draw=green}] { node[draw=none] {2} }
    edge from parent node[draw=none,pos=0.5,above right] {$s^{(3)}$}
  };
\draw[dashed] (v1) -- (v3);
\end{tikzpicture}
\end{minipage}

\caption{Left: example of a subgame based on Example~\ref{ex:circular} and Figure~\ref{fig:efg-circular}. Bounded region by thick dark lines form an example of a subgame corresponding to the history of asking $q^{(2)}$ and getting an answer of $1$. Right: the gadget game used for solving the subgame shown on the left. Note that it is the Answerer who takes the first action, not chance (which would be the case for naive subgame solving), and the actions only include the items that are consistent with $H$, in this case $s^{(1)}$ and $s^{(3)}$.}
\label{fig:efg-subgame}
\end{figure*}

\section{Implementation Details}

\subsection{General Implementation}
\paragraph{Method Hyperparameters} Since UoT shares a similar simulation procedure with GoT, we use the same values for the simulation depth $d$ and the number of candidate questions $m$ for both methods within each experiment, although these values may vary across different experiments. For DC, we similarly provide a set of $m$ candidate questions which it may choose from.

\paragraph{Question Generation When There Are Two Items Left} In instances during a (W)SLSR game where the current history $H$ satisfies $|S(H)| = 2$, we deterministically construct two candidate questions of the form “Is x the correct item?” for each remaining item, instead of sampling questions from the LLM. While not strictly required, it is implemented primarily for efficiency. Under \cref{ass:can_always_progress}, any question that satisfies the assumption will successfully distinguish between the two remaining items, thereby terminating the game. This eliminates the need to query an LLM at such states, thereby reducing the latency associated with constructing the simulation tree.

\paragraph{Question Caching and Reuse} Since we construct the SLSR game on demand in our experiments, we need to ensure the consistency between the game instance played by each of the methods. During each execution of GoT, the set of candidate questions $g(S(H))$ generated by the LLM for any explored history $H$ is cached for reuse. In the case of UoT and DC, if the current history has previously been explored by GoT, the corresponding cached set of candidate questions is reused in place of sampling new questions from the LLM. This is done to ensure a fair comparison across methods, as variations in the quality of candidate questions can substantially impact performance. 

\paragraph{Handling erroneous LLM outputs} For ease of parsing, we require the LLMs to produce outputs in JSON format. While the model occasionally fails to adhere to this format, we address such cases by retrying the generation. In our experiments, this retry mechanism proved effective, and no further issues were observed.

\subsection{GoT}
\paragraph{When Simulation Tree is Terminal}
During Simulation Step (1), after constructing the local simulation tree, we verify whether each leaf node corresponds to a terminal state, i.e., whether $|S(l)| = 1$ holds for all leaf nodes. If this condition is met, the strategy derived from the current subgame is adopted for all subsequent question selections until the end of the game, rather than being used solely for the next question followed by resolving a new subgame at the next step. This is justified by the fact that the current subgame extends to the end of the game, thereby obviating the need for iterative strategy construction in subsequent steps.

\subsection{BR}
\paragraph{Obtaining Best Response}
To obtain a best response (BR), we consider a non-adversarial setting in which the item (distribution) is drawn from a known prior $P^o$ rather than chosen adversarially. Under this assumption, the interaction can be modeled as a single-player extensive-form game with imperfect information, analogous to our SLS representation, by replacing the root node corresponding to the Item Chooser with a chance node whose outgoing edges are weighted according to $P^o$. The optimal Questioner strategy in this case is deterministic and should minimize the expected cost incurred. We obtain this strategy by adopting an iterative, on demand framework similar to GoT, replacing step 3 with a simple backward induction.
\begin{example}
    Consider a game similar to Example~\ref{ex:circular}, with $s^{(1)}, s^{(2)}, s^{(3)}$ being distributed under the prior $P^o=(0.8, 0.1, 0.1)$.
    \label{ex:pomdp}
\end{example}
In Example~\ref{ex:pomdp}, the Questioner should always ask $q^{(1)}$ first, followed by either $q^{(2)}$ or $q^{(3)}$. This yields an expected cost of $1.2$. \cref{fig:pomdp} illustrates the extensive form representation of \cref{ex:pomdp}.

\begin{figure}[ht]
\begin{tikzpicture}[
  level distance=12mm,
  every node/.style={circle,draw,minimum size=2mm,inner sep=0pt},
  level 1/.style={sibling distance=28mm},
  level 2/.style={sibling distance=10mm},
  level 3/.style={sibling distance=5mm},
  invertednode/.style={
    regular polygon,
    regular polygon sides=3,
    draw,
    minimum size=3mm,
    inner sep=0pt,
    rotate=180,
  },
]
\usetikzlibrary{trees,shapes.geometric}

\node[circle, draw, fill=black, minimum size=3mm, inner sep=0pt] {}
  child { node[invertednode] (v1){}
    child[edge from parent/.style={draw=red}] { node[draw=none] {1}
    }
    child[edge from parent/.style={draw=blue}] { node[invertednode] (v13a) {}
      child[edge from parent/.style={draw=red}] { node[draw=none] {2} }
      child[edge from parent/.style={draw=green}] { node[draw=none] {2} }
    }
    child[edge from parent/.style={draw=green}] { node[invertednode] (v12a){}
      child[edge from parent/.style={draw=red}] { node[draw=none] {2} }
      child[edge from parent/.style={draw=blue}] { node[draw=none] {2} }
    }
    edge from parent node[draw=none,pos=0.5,above] {$0.8$ }
  }
  child { node[invertednode] (v2){}
    child[edge from parent/.style={draw=red}] { node[invertednode] (v23a){}
      child[edge from parent/.style={draw=blue}] { node[draw=none] {2} }
      child[edge from parent/.style={draw=green}] { node[draw=none] {2} }
    }
    child[edge from parent/.style={draw=blue}] { node[draw=none] {1}
    }
    child[edge from parent/.style={draw=green}] { node[invertednode] (v12b){}
      child[edge from parent/.style={draw=red}] { node[draw=none] {2} }
      child[edge from parent/.style={draw=blue}] { node[draw=none] {2} }
    }
    edge from parent node[draw=none,pos=0.5,left] {$0.1$}
  }
  child { node[invertednode] (v3){}
    child[edge from parent/.style={draw=red}] { node[invertednode] (v23b){}
      child[edge from parent/.style={draw=blue}] { node[draw=none] {2} }
      child[edge from parent/.style={draw=green}] { node[draw=none] {2} }
    }
    child[edge from parent/.style={draw=blue}] { node[invertednode] (v13b) {}
      child[edge from parent/.style={draw=red}] { node[draw=none] {2} }
      child[edge from parent/.style={draw=green}] { node[draw=none] {2} }
    }
    child[edge from parent/.style={draw=green}] { node[draw=none] {1}
    }
    edge from parent node[draw=none,pos=0.5,above] {$0.1$}
  };
\draw[dashed] (v1) -- (v2);
\draw[dashed] (v2) -- (v3);
\draw[dashed] (v23a) to[out=30,in=150] (v23b);
\draw[dashed] (v13a) to[out=30,in=150] (v13b);
\draw[dashed] (v12a) to[out=30,in=150] (v12b);
\end{tikzpicture}
\caption{EFG representation of Example~\ref{ex:pomdp}. 
The root node \tikz[baseline=-0.65ex]{\node[circle ,draw,minimum size=3mm,inner sep=0pt,rotate=0,fill=black] {};} is a chance node and the other nodes 
\tikz[baseline=-1ex]{\node[regular polygon,regular polygon sides=3,draw,minimum size=3mm,inner sep=0pt,rotate=180] {};}
belong to the Questioner. The respective probabilities of each item is shown.}
\label{fig:pomdp}
\end{figure}

Note that in this setting, the problem is equivalent to planning under partial observability, i.e., a Partially Observable Markov Decision Process (POMDP)\cite{Spaan12pomdp}.
\section{Additional Results}

\subsection{SLSR with Synthetic Splits}
The quality of a candidate question $q$ in SLSR at some infoset $I(H)$ can be roughly measured using the ratio $|Y(S(H), q)| : |\bar{Y}(S(H), q)|$, representing how evenly $q$ is able to split $S(H)$ into two.
Given the difficulty in controlling the quality of LLM-sampled questions, we wish to study SLSR in a more controlled environment.
Instead of using a LLM for question generation, we define a question generating function $g': 2^{\mathcal{S}}\setminus\phi\times(0,1)\rightarrow2^{\mathcal{S}}$ which takes as input $S(H)$ and a split ratio $r\in(0,1)$, and outputs the set\footnote{The question itself does not matter in this case, as both UoT and GoT is language agnostic.} $Y'\subset S(H)$ (and consequently $\bar{Y}'=S(H)\setminus Y'$) at the fixed ratio $\frac{|Y'|}{|S(H)|}=r$. Two implementations of $g'$ were considered: (i) Random Splits: Sample and return $r\cdot |S(H)|$ items uniformly at random from $S(H)$ without replacement. (ii) Feature Based Splits: For each $s \in S(H)$, generate $k$ features $F_i=(f_1, ..., f_k), f_j\in[0,1]$ at the start of the game. $g'$ randomly selects one of $k$ features, and sorts $S(H)$ based on the selected feature. The top $r\cdot|S(H)|$ items are returned.

The results are shown in Table~\ref{tab:synthetic_splits_features}. 
GoT consistently achieves superior performance compared to UoT, particularly as $r$ decreases and the splits become increasingly skewed. This suggests that GoT provides the most improvement when the candidate questions are not optimal. This however does not translate well when actual questions are used due to other confounding factors not captured.

\begin{table}[ht]
\centering
\begin{tabular}{p{1.2cm} p{1.5cm} p{1.5cm} p{1.5cm}}
\multirow{2}{*}{\textbf{Method}} & \multicolumn{3}{c}{$r$} \\
& \textbf{0.4} & \textbf{0.33} & \textbf{0.25} \\
\multicolumn{4}{c}{\cellcolor{gray!15}\textbf{3 Features}} \\
GoT & 9.4 & 10.0 & 13.8 \\
UoT & 10.0 & 11.0 & 15.0 \\
\multicolumn{4}{c}{\cellcolor{gray!15}\textbf{5 Features}} \\
GoT& 9.2 & 10.2 & 13.2 \\
UoT & 10 & 11.0 & 15.0 \\
\multicolumn{4}{c}{\cellcolor{gray!15}\textbf{Random Splits}} \\
GoT& 8.8 & 9.8 & 11.6 \\
UoT & 10 & 11 & 15 \\
\end{tabular}
\caption{\textbf{Worst case interaction length on SLSR with Synthetic splits with Common}. $d$ is set to be 3.}
\label{tab:synthetic_splits_features}
\end{table}

\subsection{Full SLSR Games}

For each SLSR game, there exists a lower bound on the expected worst case number of questions required. This bound can be obtained by solving the fully specified game. As this is only feasible for smaller datasets, we constructed complete SLSR games on the smaller Breeds dataset and experimented on GoT and UoT by setting the simulation depth $d$ to game tree depth $D$. Since constructing the full game tree is required, we enforce \cref{ass:will_always_progress} to ensure that all branches eventually terminate. The results are shown in \cref{tab:full_search}, where GoT’s performance aligns with the lower bound on the performance. On the other hand, UoT’s performance showed minimal improvement even when provided with the full game tree during simulation, suggesting that achieving the lower bound requires a non-deterministic strategy.

\begin{table}[ht]
\centering
\begin{tabular}{>{\centering}p{1cm} c c}
\multirow{2}{*}{\textbf{Method}} & \multicolumn{2}{c}{\textbf{Experiment}} \\
& \textbf{Subgame ($d=3$)} &\textbf{Full Game ($d=D$)} \\
\multicolumn{3}{c}{\cellcolor{gray!15}\textbf{GPT 4.1 + Even split prompt}} \\
GoT& 6.4 & 5.84 \\
UoT & 7 & 7\\
\multicolumn{3}{c}{\cellcolor{gray!15}\textbf{GPT 4.1 + Natural prompt}} \\
GoT& 7.4 & 5.64\\
UoT & 9 & 8\\
\multicolumn{3}{c}{\cellcolor{gray!15}\textbf{Qwen 72B Instruct + Even split prompt}} \\
GoT& 6.2 & 5.44\\
UoT & 7 & 7\\
\multicolumn{3}{c}{\cellcolor{gray!15}\textbf{Qwen 72B Instruct + Natural prompt}} \\
GoT& 6.6 & 5.88\\
UoT & 8 & 8\\
\end{tabular}
\caption{\textbf{Worst case interaction length for on full SLSR Game.} Dataset used is the Breeds Dataset.}
\label{tab:full_search}
\end{table}

\subsection{Item Weights}
Figure~\ref{fig:weights_dist} presents a histogram of item weights used in the weighted variant for the Breeds and Common datasets in the 20Q setting, DX in the MD setting, and Flodial in the TS setting. For datasets in the 20Q setting, the weights were assigned randomly, as there was no clear method for determining meaningful valuations for each item given the nature of the datasets. For TS and MD, the weights are obtained by asking Chatgpt 5.2 to annotate each disease/fault with a severity score. The prompt used is shown in \cref{tab:annotate_weight}

\begin{table}[ht]
\centering
\begin{tabular}{p{7.5cm}}
\cellcolor{gray!15}\textbf{MD}\\
Here is a list of diseases. I want you to assign a severity score between 1 to 10 for each of these diseases. This severity score should depend on how life threatening the disease can be, and how urgently it needs treatment or care.\\
The list of diseases:\\
\textbf{[diseases]}\\
\cellcolor{gray!15}\textbf{TS}\\
Here is a list of car faults. I want you to assign a severity score between 1 to 10 for each of these faults. This severity score should depend on how severe the fault can be, and how urgently it needs to be repaired.\\ 
The list of car faults:\\
\textbf{[faults]}\\
\end{tabular}
\caption{\textbf{Prompt used to annotate the item weights for the MD and TS settings.}}
\label{tab:annotate_weight}
\end{table}

It is important to note that the performance of GoT in a WSLSR game may be sensitive to the underlying distribution of item weights. In particular, when the weights were sampled from a uniform distribution, we observed that the relative performance of GoT compared to UoT mirrored the results in the unweighted setting, providing a consistent but smaller improvement.

\begin{figure}[ht]
\centering
\includegraphics[width=1.0\columnwidth]{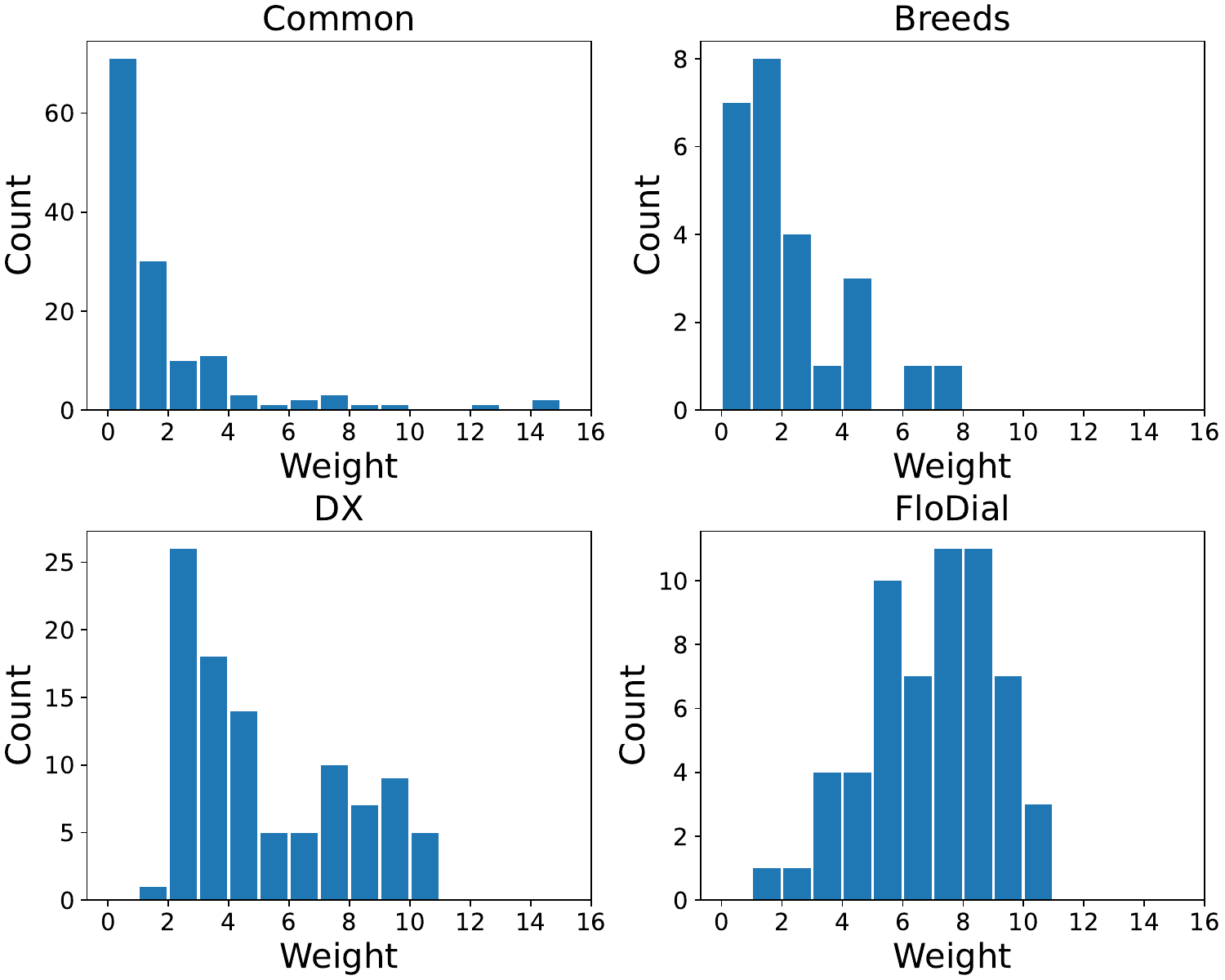}
\caption{\textbf{Distribution of item weights used in our experiments.} Weights are sampled from a lognormal distribution with parameters $\mu=0, \sigma=1$.}
\label{fig:weights_dist}
\end{figure}

\subsection{Randomization of GoT's strategies}
We quantified the randomness of GoT’s strategy using entropy, as shown in Figure~\ref{fig:strat_entropy}. In general, the resulting strategies exhibit higher entropy in the standard SLSR game compared to the WSLSR variant. This is likely due to the item weight distribution being tail-heavy, which encourages the strategy to favor candidate questions that more efficiently isolate the heavily weighted items. Nonetheless the strategies remain sufficiently random, suggesting that we do not encounter degenerate cases in which a deterministic strategy would be optimal.

\begin{figure}[ht]
\centering
\includegraphics[width=1.0\columnwidth]{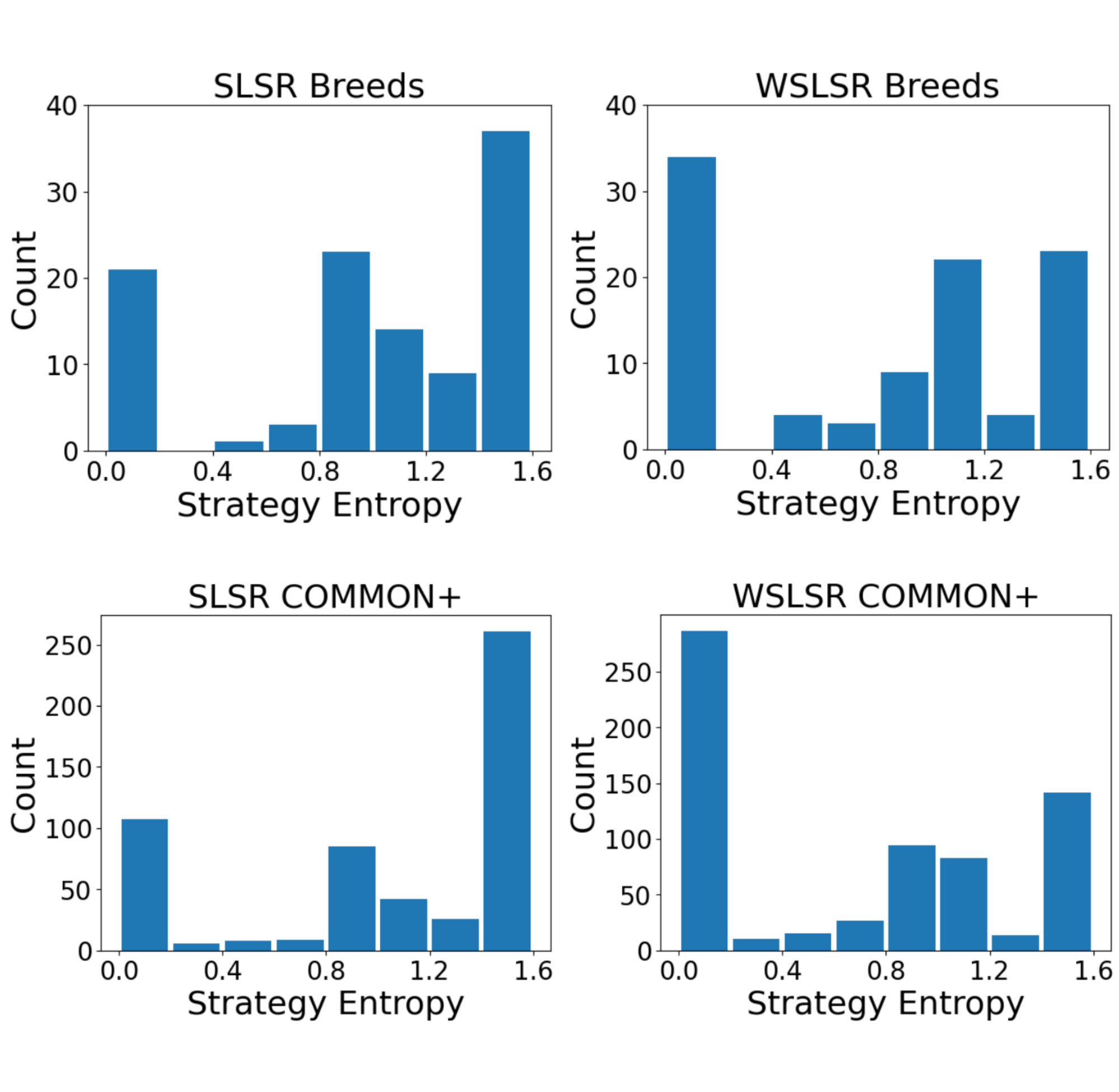}
\caption{\textbf{Entropy of the strategies obtained using GoT at each infoset.} As $m=3$, the maximum entropy of the strategy is $\log_2(3)\approx1.58$ when every candidate question is chosen with equal probability. Strategies for when there are only two items left are not included.}
\label{fig:strat_entropy}
\end{figure}

\section{Discussion on Assumptions}
All of the aforementioned assumptions are either naturally satisfied by the experimental setup or are explicitly enforced during the execution of our method.

\paragraph{\cref{ass:unique_ans}} 
In our formulation of the SLS problem, we assume that (i) the codomain of $f$ is binary, and (ii) each item $s \in \mathcal{S}$ has a unique, well-defined answer under $f$. However, this assumption may not hold in real-world scenarios due to the inherent ambiguities and interpretive variability commonly present in natural language. One source of such ambiguity arises when an item name refers to multiple distinct entities. For example, consider the question: Is the Titanic something a person can pick up with their hands? The answer would be no if referring to the actual ship, but yes if referring to a DVD of the movie Titanic.
In our experiments, to abstract away such linguistic ambiguity, we instantiate the labeling function  $f$ with an LLM and treat its output as a definitive label for each item.

\paragraph{\cref{ass:no_lying}} This naturally follows from \cref{ass:unique_ans,ass:reliable_llm} and the game structure.

\paragraph{\cref{ass:common_knowledge}} This naturally follows from the game structure.

\paragraph{\cref{ass:oracle-access}} We employed LLMs, accessed both via online APIs and locally hosted instances, to implement the function $f$. For online API-based queries, the average latency per query is approximately 1 to 5 seconds, while for locally hosted models, the latency is slightly lower, typically around 1 to 3 seconds. 

\paragraph{\cref{ass:can_always_split}} This assumption is satisfied as $\mathcal{Q}$ is the set of all possible natural language questions. In particularly it is trivially satisfied by the presence of identity questions.

\paragraph{\cref{ass:can_always_progress}} This assumption holds in nearly all cases when LLMs are used as the question generator $g$. To enforce this, we verify whether there exists at least one question from the sampled candidate set that permits progression of the game; if no such question is found, we resample the candidate questions.
For certain experiments (such as full SLSR game solving) we impose a stronger version of this assumption:

\begin{assumption}
    For every subset $S\subseteq\mathcal{S}$, for every $q\in g(S)$, there exists a pair of distinct items  $s, s'\in S$ where $f(q, s) \neq f(q, s')$
    \label{ass:will_always_progress}
\end{assumption}

\noindent This guarantees that the game tree will be finite, thereby enabling the complete construction of the game tree. To enforce this condition, we explicitly remove any candidate question that fails to satisfy the assumption from the set of candidate questions.

\paragraph{\cref{ass:reliable_llm}} This assumption is made to mitigate potential inaccuracies arising from LLM-based classifications. Although LLMs may produce erroneous outputs, handling such errors is beyond the scope of our method and is therefore not the primary focus of this work. Without this assumption, classification errors during the simulation process could, in the worst case, require a restart of the game. Specifically, if the target item $s^*$ is misclassified, it may be incorrectly eliminated from the set of possible items upon receiving a contradictory response from the answerer. Moreover, we observed that such misclassifications and consequently restarts occurred frequently when attempting to reproduce the results of \citet{Hu2024UncertaintyOT}, which drastically affected the worst case performance. Since this does not influence the Questioner’s strategy for any of the evaluated methods, we believe it does not compromise the fairness of the experimental results.
To enforce this assumption for both GoT and UoT, we depart from the approach used in \citet{Hu2024UncertaintyOT}, where a separate LLM instance was employed to simulate the Answerer. Instead, we cache the item set splits generated during the simulation step corresponding to each question and reuse them as the Answerer's response during our experiments. This strategy ensures consistency between the item classifications performed during the simulation phase and the responses observed during the actual execution of each method.

\section{Dataset Details}
All constructed datasets (along with item weights used for the weighted variant) are released alongside the accompanying code. 

\paragraph{20Q} To construct the datasets, we relied on publicly available sources. For example, the American Kennel Club for the Breeds dataset, and ChatGPT for the S128 dataset. All AI-generated datasets were subsequently verified to ensure that each item included corresponds to a real-world entity. The size of these datasets is intentionally designed to be comparable to those used in \citet{Hu2024UncertaintyOT}, since both their method (UoT) and GoT shares a similar computational complexity due to the future simulation steps. Game-solving in GoT contributes only a minor fraction to the running time, and GoT can be seen as a robust extension to UoT at little extra cost. This is also why we primarily compare our method against UoT.

\paragraph{MD} The DX dataset consists of 1148 entries of real cases across 461 unique diseases. We sampled 100 out of the 461 unique diseases uniformly at random to be used as the item hypothesis space. For the average case analysis, we count how many entries in the DX dataset has each of the 100 diseases as the final diagnosis, and use this as an estimate of the population distribution.

\paragraph{TS} Flodial contains both laptop-related and car-related faults. We decided to focus on the latter due to its greater variety and clearer distinction between each fault. In the dataset, each fault corresponds to a short textual description at a leaf node of a directed graph that encodes a troubleshooting flowchart. We extract these leaf descriptions and prompt GPT-5.2 Thinking to map each one to a concrete car-fault label. We then manually verify and edit the resulting labels based on the associated troubleshooting dialogues provided in Flodial. Finally, we merge or remove near-duplicate faults (e.g., “water leakage” vs. “water pump leak”) to ensure that each retained fault is distinct and readily distinguishable.

\section{Examples of Questions Asked}
\paragraph{Unnatural Questions} A typical question posed by the LLM in a game of Twenty Questions with the Common dataset resembles the following:
\begin{itemize}
    \item Is the item something a person can physically pick up with their hands (assuming average human size and strength)?
    \item Is the item man-made?
    \item Is the item an animal?
\end{itemize}
Such questions generally pertain to intrinsic properties of the items themselves.

In contrast, when the \textit{even} prompt was used, we occasionally observed the generation of more "unnatural" questions:

\begin{itemize}
    \item Is the breed's name made up of only one word?
    \item Is the name of the item longer than 7 characters?
    \item Does the item's name start with a letter from A-M (inclusive)?
\end{itemize}

While we find it difficult provide a quantitative evaluation of the ``naturalness'' of a question, intuitively, unnatural questions typically reference the item names directly, rather than the underlying items themselves. Natural questions on the other hand generally focuses on intrinsic properties of the items.
Unnatural questions can be interpreted as the LLM's attempt to optimize the question by improving the chances of obtaining an even-split, which is optimal as implied by \cref{thm:even-split-opt-sls}. These occurrences were primarily observed when GPT-4.1 was used as the LLM. We suspect this to be why a noticeable improvement in DP can be observed when \textit{even} is used over \textit{natural}, as seen from \cref{tab:dls_natural_even}. This is less noticeable in GoT and UoT, possibly due to both performing planning via lookahead. This could reduce the influence of the few ``unnatural" questions on the overall performance.

\begin{table}[ht]
\centering
\begin{tabular}{c c c c}
    \multirow{2}{*}{\textbf{Method}} & \multicolumn{3}{c}{\textbf{20Q}}\\
& \textbf{Common} & \textbf{S128} & \textbf{Breeds}\\
\multicolumn{4}{c}{\cellcolor{gray!15}\textbf{GPT 4.1 + Even split prompt}} \\
GoT& 9.4 & 9.2 & 6.4 \\
UoT & 10 & 10 & 7\\
DP & 11.7 & 12.9 & 7.8\\
DC & 12.3 & 12.6 & 9.0\\
\multicolumn{4}{c}{\cellcolor{gray!15}\textbf{GPT 4.1 + Natural prompt}} \\
GoT& 10.2 & 11.8 & 7.4\\
UoT & 11 & 13 & 9\\
DP & 13.8 & 16.2  & 7.8 \\
DC & 12.9 & 14.6 & 9.3\\
\multicolumn{4}{c}{\cellcolor{gray!15}\textbf{Qwen 2.5 72B Instruct + Even split prompt}} \\
GoT& 10.2 & 10.2 & 6.2\\
UoT & 11 & 11 & 7\\
DP & 11.9 & 14.6 & 8.0\\
DC & 12.4 & 13.6 & 8.3 \\
\multicolumn{4}{c}{\cellcolor{gray!15}\textbf{Qwen 2.5 72B Instruct + Natural prompt}} \\
GoT& 10.0 & 10.8 & 6.6\\
UoT & 11 & 12 & 8\\
DP & 12.7 & 19.2 & 8.0\\
DC & 12.7 & 17.9 & 7.8\\
\end{tabular}
\caption{\textbf{Worst case interaction length for each method using even or natural prompts}. For both Uot and GoT We set $d = 3$.}
\label{tab:dls_natural_even}
\end{table}

\paragraph{Questions in Practical Settings}
As noted previously, we claimed that the questions generated by the LLM in the MD and TS settings are reasonably natural. To illustrate this, we provide representative examples of questions sampled from GPT 4.1 in each setting.

Examples of questions sampled for Medical Diagnosis:
\begin{itemize}
    \item Is the main symptom dermatological, affecting the skin (e.g., rashes, eczema, pigmentation, lesions)?
    \item Is the disease primarily associated with an infection or inflammation (for example, hepatitis, bronchitis, pneumonia, infectious mononucleosis, sepsis, pharyngitis, gastritis)?
    \item Is the primary symptom related to the respiratory system (such as cough, shortness of breath, or chest pain)?
\end{itemize}

These questions target either the patient’s symptoms or the affected body systems, which aligns with how diagnosis is typically conducted in practice. Moreover, each question can be (i) posed to the patient with minor rephrasing, (ii) used as a mental checklist during clinical reasoning, or (iii) mapped to a diagnostic test relevant to specific diseases.

Examples of questions sampled for Troubleshooting:
\begin{itemize}
    \item Do you notice any warning lights or indicators on the dashboard when you turn the key to the 'ON' position?
    \item Are there any electrical accessories (like headlights or dashboard lights) not functioning even when the ignition is off?
    \item Does the car fail to start or crank when turning the ignition key?
\end{itemize}

These questions are easy to understand and, importantly, refer to symptoms that can be readily verified by the customer. This property makes them suitable for the context of troubleshooting.

\paragraph{Ambiguous Questions}

In our formulation of the SLS problem, we assume that (i) the codomain of $f$ is binary, and (ii) each item $s \in \mathcal{S}$ has a unique, well-defined answer under $f$, as stated in Assumption~\ref{ass:unique_ans}. However, this assumption may not hold in real-world scenarios due to the inherent ambiguities and interpretive variability commonly present in natural language. One source of such ambiguity arises when an item name refers to multiple distinct entities. For example, consider the question: Is the Titanic something a person can pick up with their hands? The answer would be no if referring to the actual ship, but yes if referring to a DVD of the movie Titanic.
While language ambiguity is a well-studied phenomenon in the field of Natural Language Processing, addressing such ambiguities remains challenging when applied to the more formal and rigorous framework of Game Theory. We believe that addressing this challenge presents a promising direction for future research.

\section{Prompts Used}
The prompts used for sampling questions (to implement the function $g$) in the weighted and unweighted variants are shown in \cref{tab:slsr_prompts,tab:wslsr_prompts} respectively. 
DP uses the same prompts for asking questions by setting $m=1$. DC additionally uses the prompt in \cref{tab:choose_prompt} to choose the question from the candidate set. The prompts used to generate answers to questions for items (to implement the function $f$) is shown in \cref{tab:class_prompts}.

\begin{table*}[ht]
\centering
\begin{tabular}{p{15cm}}
\cellcolor{gray!15}\textbf{Even Prompt 20Q}\\
We are playing the game of ask and seek, where I am the answerer and you are the questioner. \\
I have chosen one item from a list of items, and your goal is to identify the correct item from a list of items by asking a series of yes or no questions. \\
Your should try to ask as few questions as possible.\\
Here are all the items:\\
\textbf{[items]}\\
One of the items is the correct item, please design questions about those items that helps you find the right item and can only be answer by Yes or No.\\
\textit{Notably, this question should fulfill that the count of YES items and NO items are almost the same with a permissible discrepancy of no more than one!}\\
You should respond with the best \textbf{[m]} questions you can think of.\\
\cellcolor{gray!15}\textbf{Natural Prompt 20Q}\\
We are playing the game of ask and seek, where I am the answerer and you are the questioner. \\
I have chosen one item from a list of items, and your goal is to identify the correct item from a list of items by asking a series of yes or no questions. \\
Your should try to ask as few questions as possible.\\
Here are all the items:\\
\textbf{[items]}\\
One of the items is the correct item, please design questions about those items that helps you find the right item and can only be answer by Yes or No.\\
\textit{You should avoid asking unnatural questions such as ones about the number of words the item has, or any such similar questions.}\\
You should respond with the best \textbf{[m]} questions you can think of.\\
\cellcolor{gray!15}\textbf{MD}\\
You are a doctor trying to diagnose a patient. \\
The patient suffers from one of the diseases from the provided list, and your goal is to identify the correct item from this list asking a series of yes or no questions. \\
You should aim to ask as few questions as possible. \\
Here are all the diseases:
\textbf{[diseases]}\\
One of the diseases is the correct disease, please design questions about those items that help you find the right item and can only be answered by Yes or No.\\
You should respond with the best \textbf{[m]} questions you can think of.\\
\cellcolor{gray!15}\textbf{TS}\\
You are a car mechanic providing remote troubleshooting to a customer.\\
The customer's car suffers from one of the faults from the provided list, and your goal is to identify the correct fault from this list asking a series of yes or no questions.\\
You should aim to ask as few questions as possible.\\
Here are all the faults:
\textbf{[faults]}\\
One of the faults is the correct fault, please design questions about these faults that help you find the right item and can only be answered by Yes or No.\\
You should respond with the best \textbf{[m]} questions you can think of.\\
\end{tabular}
\caption{\textbf{Prompts used to sample candidate questions for unweighted variant.} The main difference between even and natural prompts are \textit{italicized}. The items are represented as a python list, e.g. [`Oppenheimer', `Alan Turing', `A Beautiful Mind'], while $m$ is an integer}
\label{tab:slsr_prompts}
\end{table*}

\begin{table*}[ht]
\centering
\begin{tabular}{p{15cm}}
\cellcolor{gray!15}\textbf{20Q}\\
We are playing the game of ask and seek, where I am the answerer and you are the questioner. \\
I have chosen one item from a list of items, and your goal is to identify the correct item from a list of items by asking a series of yes or no questions. \\
Each item has a value associated with it, and your penalty score will be the number of questions asked until you find the right item multiplied by the value of the correct item. You should try to minimize this score. \\
Here are all the items and their values:
\textbf{[items with weights]}\\
One of the items is the correct item, please design questions about those items that helps you find the right item and can only be answer by Yes or No.\\
You should avoid asking unnatural questions such as ones about the number of words the item has, or any such similar questions.\\
You should respond with the best \textbf{[m]} questions you can think of.\\
\cellcolor{gray!15}\textbf{MD}\\
You are a doctor trying to diagnose a patient.\\
The patient suffers from one of the diseases from the provided list, and your goal is to identify the correct item from this list asking a series of yes or no questions.\\
Each disease has a severity associated with it. Your penalty will be the number of questions asked multiplied by the severity value of the correct disease. You should try to minimize this penalty.\\
Here are all the diseases:
\textbf{[diseases with weights]}\\
One of the diseases is the correct disease, please design questions about those items that help you find the right item and can only be answered by Yes or No.\\
You should respond with the best \textbf{[m]} questions you can think of.\\
\cellcolor{gray!15}\textbf{TS}\\
You are a car mechanic providing remote troubleshooting to a customer.\\
The customer's car suffers from one of the faults from the provided list, and your goal is to identify the correct fault from this list asking a series of yes or no questions.\\
Each fault has a severity associated with it. Your penalty will be the number of questions asked multiplied by the severity value of the correct fault. You should try to minimize this penalty.\\
Here are all the faults:
\textbf{[faults with weights]}\\
One of the faults is the correct fault, please design questions about these faults that help you find the right item and can only be answered by Yes or No.\\
You should respond with the best \textbf{[m]} questions you can think of.\\
\end{tabular}
\caption{\textbf{Prompt used to sample candidate questions for weighted variant.} Items with weights are represented as a python dictionary, e.g. \{`Oppenheimer' : 3, `Alan Turing' : 2, `A Beautiful Mind' : 2\}, while $m$ is an integer}
\label{tab:wslsr_prompts}
\end{table*}

\begin{table*}[ht]
\centering
\begin{tabular}{p{15cm}}
\cellcolor{gray!15}\textbf{Choosing Prompt for 20Q Unweighted Variant}\\
We are playing the game of ask and seek, where I am the answerer and you are the questioner.\\
I have chosen one item from a list of items, and your goal is to identify the correct item from a list of items by asking a series of yes or no questions.\\
Your should try to ask as few questions as possible.\\
Here are all the items:\\
\textbf{[items]}\\
Here are some Yes or No question about them: \\
\textbf{[question]}\\
One of the items is the correct item, please choose one question from the list of questions that helps you find the right item.\\
\cellcolor{gray!15}\textbf{Choosing Prompt for 20Q Weighted Variant}\\
We are playing the game of ask and seek, where I am the answerer and you are the questioner.\\
I have chosen one item from a list of items, and your goal is to identify the correct item from a list of items by asking a series of yes or no questions.\\
Each item has a value associated with it, and your penalty score will be the number of questions asked until you find the right item multiplied by the value of the correct item. You should try to minimize this score.\\
Here are all the items:\\
\textbf{[items with weights]}\\
Here are some Yes or No question about them: \\
\textbf{[question]}\\
One of the items is the correct item, please choose one question from the list of questions that helps you find the right item.\\
\cellcolor{gray!15}\textbf{Choosing Prompt for MD Unweighted Variant}\\
'You are a doctor trying to diagnose a patient.\\
The patient suffers from one of the diseases from the provided list, and your goal is to identify the correct item from this list asking a series of yes or no questions.\\
Your should aim to ask as few questions as possible.\\
Here are all the diseases:
\textbf{[diseases]}\\
Here are some Yes or No question about them: 
\textbf{[question]}\\
One of the diseases is the correct disease, please choose one question from the list of questions that helps you narrow down to the right disease as quickly as possible.\\
\cellcolor{gray!15}\textbf{Choosing Prompt for TS Unweighted Variant}\\
You are a car mechanic providing remote troubleshooting to a customer.\\
The customer's car suffers from one of the faults from the provided list, and your goal is to identify the correct fault from this list asking a series of yes or no questions.\\
Your should aim to ask as few questions as possible.\\
Here are all the faults:
\textbf{[faults]}\\
Here are some Yes or No question about them: 
\textbf{[question]}\\
One of the faults is the correct fault, please choose one question from the list of questions that helps you narrow down to the right fault as quickly as possible.\\
\end{tabular}
\caption{\textbf{Prompts used by DC to choose questions.} Questions are represented as a python dictionary, e.g. \{0 : `Related to codes?', 1 : `Is it a movie?', 2 : `Is it a person?'\}. The weighted variants of the prompt for MD and TS setting is obtained by modifying the unweighted variant in a similar manner as in the 20Q setting.} 
\label{tab:choose_prompt}
\end{table*}

\begin{table*}[ht]
\centering
\begin{tabular}{p{15cm}}
\cellcolor{gray!15}\textbf{Answer Prompt}\\
Here are some \textbf{[items/diseases/car faults]}:\\
\textbf{[items]}\\
Here is a Yes or No question about them: \\
\textbf{[question]}\\
Please classify the \textbf{[items/diseases/car faults]} above based on this question. You should ensure that each item should only appear once in the final classification. If you are unsure about any item, you can mention it in the elaboration.\\
\end{tabular}
\caption{\textbf{Prompt used to get answers to questions.} We refer to the hypothesis space as items, diseases or car faults for the 20Q, MD, and TS setting respectively.}
\label{tab:class_prompts}
\end{table*}

\section{Experiment Hardware}
Running the EFG solver was done on a AMD Ryzen 5 5500 CPU with 32 GB ram. Hosting the local instance of Qwen 2.5 72B Instruct LLM was done via the vLLM\cite{kwon2023efficient} library on either 2 Nvidia H200 GPUs each with 141 GB memory, or 4 Nvidia H100 GPUs each with 80 GB memory.

\end{document}